\def\eqref#1{equation~\ref{#1}}
\def\1{\bm{1}}
\DeclareMathAlphabet{\mathsfit}{\encodingdefault}{\sfdefault}{m}{sl}
\SetMathAlphabet{\mathsfit}{bold}{\encodingdefault}{\sfdefault}{bx}{n}
\definecolor{darkblue}{rgb}{0, 0, 0.5}
\newtheorem{theorem}{Theorem}
\newtheorem{fact}{Fact}
\theoremstyle{definition}
\newtheorem{definition}{Definition}
\newtheorem{corollary}{Corollary}[theorem]
\newcommand*{\defeq}{\stackrel{\scalebox{0.5}{\text{def}}}{=}}
\definecolor{Red}{rgb}{0.8,0.0,0.05}
\definecolor{Green}{rgb}{0,0.6,0}
\definecolor{Blue}{rgb}{0.1, 0.1, 0.9}
\definecolor{LightBlue}{rgb}{0.9, 0.8, 1.0}
\definecolor{Grey}{rgb}{0.6,0.6,0.6}
\title{Dual-objective Language Models:\\
Training Efficiency Without Overfitting}
\author{David Samuel\footnotemark\\
University of Oslo\\
\texttt{davisamu@uio.no}
\And
\textbf{Lucas Georges Gabriel Charpentier}\footnotemark[1]\\
National Library of Norway\\\texttt{lucas.charpentier@nb.no}
}
\begin{document}

\maketitle

\renewcommand*{\thefootnote}{\fnsymbol{footnote}}

\footnotetext[1]{Equal contribution. Work done at the Language Technology Group, University of Oslo.}


\renewcommand*{\thefootnote}{\arabic{footnote}}

\begin{abstract}
This paper combines autoregressive and masked-diffusion training objectives without any architectural modifications, resulting in flexible language models that outperform single-objective models. Autoregressive modeling has been a popular approach, partly because of its training efficiency; however, that comes at the cost of sensitivity to overfitting. On the other hand, masked-diffusion models are less efficient to train while being more resilient to overfitting. In this work, we demonstrate that dual-objective training achieves the best of both worlds. To derive the optimal balance between both objectives, we train and evaluate 50 language models under varying levels of data repetition. We show that it is optimal to combine both objectives under all evaluated settings and that the optimal balance is similar whether targeting autoregressive or masked-diffusion downstream performance.
\end{abstract}

\section{Introduction}

The dominant paradigm for training recent language models has been \textit{autoregressive} next-token prediction \citep{NEURIPS2020_1457c0d6}. This approach is remarkably efficient in training, allowing models to quickly absorb vast amounts of text. However, this comes with a significant drawback: a tendency to overfit when training data is repeated \citep{NEURIPS2023_9d89448b}. This issue is becoming increasingly critical as the community reaches the so-called \textit{data wall} -- the imminent exhaustion of available data required to train ever-larger models according to established scaling laws \citep{10.5555/3692070.3694094}.

An alternative approach, \textit{masked-diffusion} language modeling, offers a compelling solution to the overfitting problem  \citep{prabhudesai2025diffusionbeatsautoregressivedataconstrained, ni2025difflm}. Yet, this robustness comes at the cost -- these models are known to be much less sample-efficient than their autoregressive counterparts \citep{nie2025scaling}. The complementary strengths of these two approaches suggest combining them as a natural solution to counteracting their failure modes.

\begin{figure}[h!]
    \centering
    \includegraphics[width=\linewidth]{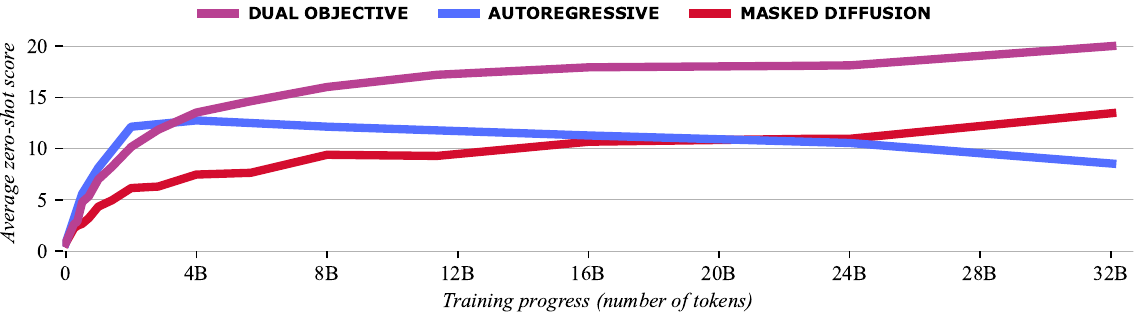}
    \caption{\textbf{The dynamics of zero-shot performance.} The three models are trained in a rather extreme setting -- 128 repetitions of the training corpus. The autoregressive objective (blue line) converges the fastest but also very quickly overfits; the masked-diffusion objective (red line) converges slowly but without being negatively affected by the high amount of repetitions. Combining both objectives together (purple line) results in fast convergence as well as robustness to overfitting.}
    \label{fig:motivation}
\end{figure}

In this work, we show that it is possible to achieve the best of both worlds by simultaneously training a single language model on both autoregressive and masked-diffusion objectives. The core idea is to use the training efficiency of the autoregressive objective for rapid initial learning while using the masked-diffusion objective to regularize the model and prevent it from overfitting. The effectiveness of this dual-objective approach is illustrated in \Cref{fig:motivation}. In the extreme data-constrained setting with 128 data repetitions, the purely autoregressive model learns quickly but then catastrophically overfits. The masked-diffusion model is immune to overfitting but converges very slowly. Our proposed dual-objective model combines the strengths of both and successfully leverages the given compute and data. The resulting models can be deployed as a standard autoregressive models with no inference overhead.

Building on this observation, we conduct a large-scale systematic study to find the optimal balance between these two objectives under varying degrees of data constraints. Our main contributions are:
\begin{itemize}[align=parleft,left=1em..2em]
    \item We propose a dual-objective training method that combines autoregressive and masked-diffusion losses, enabling a single model to excel at both unidirectional and bidirectional tasks.
    \item Through an extensive empirical study, we systematically map the relationship between data repetition, the ratio of training objectives, and final downstream performance. Demonstrating that our dual-objective approach is superior to single-objective training in all evaluated settings, for both autoregressive and masked-diffusion evaluation -- including the finding that dual-objective models outperform pure masked-diffusion models even in regular data settings.
    \item We derive two practical recommendations for setting the optimal objective ratio when training in both regular and data-constrained regimes, providing concrete guidelines for future training of large language models.
\end{itemize}

\section{Background}

Before diving into details of combining autoregressive and masked-diffusion models, we need to briefly describe those two modeling approaches and language modeling in general. As the name suggests, \emph{language models} are statistical models $p_{\bm{\theta}}(\cdot)$ of the true language distribution of some training corpus $\mathcal{D}$. The training corpus consists of sequences $\bm{x}=(x_1, x_2, \dots x_N)\in\mathcal{D}$ of subword tokens. The language models are trained by finding such parameters $\bm{\theta}$ that maximize the likelihood estimation \citep[MLE;][]{fisher1922mathematical, fisher1925theory}:
\begin{equation}
     \mathop{\text{argmax}}_{\bm{\theta}}\mathop{\mathbb{E}}_{\raisebox{-0.3pt}{$_{\bm{x}\, \sim\,\mathcal{D}}$}} \Bigl[\log p_{\bm{\theta}}(\bm{x}) \Bigr].
\label{eq:mle}
\end{equation}
In this paper, we combine two popular approaches for computing $p_{\bm{\theta}}(\cdot)$, \textit{autoregressive language modeling} and \textit{masked-diffusion language modeling}.

\subsection{Autoregressive language modeling}

Language models have a long tradition and since their inception in the seminal paper by \cite{shannon1951prediction}, they have been factored into a chain of next-token prediction terms $p_{\bm{\theta}}(x_i\,|\,\bm{x}_{<i})$:
\begin{equation}
    -\log p_{\bm{\theta}}(\bm{x}) = -\sum_{i=1}^{|\bm{x}|}\log p_{\bm{\theta}}(x_i \mid \bm{x}_{<i}) \defeq {\color{Blue}\mathcal{L}_{\text{AR}}(}\bm{x}{\color{Blue};}\,\bm{\theta}{\color{Blue})}.
\label{eq:ar}
\end{equation}
Computation of the next-token likelihoods can be efficiently parallelized when modeled by transformer networks \citep{NIPS2017_3f5ee243}, and thanks to their scalability, it has been the most popular paradigm behind the recent era of large language models \citep{NEURIPS2020_1457c0d6}.

\subsection{Masked-diffusion language modeling}

Masked-diffusion language models have recently become a popular alternative to autoregressive models \citep{NEURIPS2021_958c5305, 10.5555/3692070.3693403, 10.5555/3737916.3742051, ou2025your, nie2025large}. Computing $p_{\bm{\theta}}(\cdot)$ with masked-diffusion is slightly more complicated than with autoregression, but the resulting language model learns to handle full \textit{bidirectional} context, which can lead to increased performance on downstream tasks \citep{berglund2024the, 10.5555/3737916.3738000}.

First, following \cite{NEURIPS2021_958c5305}, we define the forward (and backward) diffusion process that gradually turns a sequence of tokens $\bm{x}$ into special mask tokens (and vice-versa). The diffusion process $\left\{\bm{x}^{t}\right\}$ depends on the time variable $t\in[0,1]$ so that $\bm{x}^{(0)} = \bm{x}$ and $\bm{x}^{(1)}$ is a fully masked sequence. The intermediate values are defined by the probability distribution $q$:
\begin{equation}
    q_{t\mid0}(\bm{x}^{t}\mid\bm{x}) \defeq \prod_{i=1}^{|\bm{x}|}q_{t \mid 0}(x^{t}_i \mid x_i)\text{; where }\,q_{t\mid0}(x^{t}_i\mid x_i) \defeq \begin{cases}
        1 - t, & x^{t}_i=x_i, \\
        t,     & x^{t}_i=\texttt{mask}.
    \end{cases}
\end{equation}

We can see that each token can either remain unchanged or turn into a mask token with probability $t$. The forward process is fully reversible and we can accordingly define the backward process, which gradually unmasks a sequence \citep{NEURIPS2021_958c5305}. Using the results from \cite{ou2025your}, the probability distribution $q_{0|t}(x_i|\,\bm{x}^{t})$ governing the backward process can be modeled with a time-independent transformer language model with parameters $\bm{\theta}$ as $p_{\bm{\theta}}(x_i\,|\,\bm{x}^{t})$. This model can be fitted to the training data by minimizing the upper bound on the negative log-likelihood estimate \citep{ou2025your}:
\begin{equation}
    -\log p_{\bm{\theta}}(\bm{x}) \leq -\int_0^1 \mathop{\mathbb{E}}_{\bm{x}^{t} \sim q_{t\mid0}(\cdot\mid\bm{x})} \Biggl[\frac{1}{t}\sum_{\left\{i\mid x_i^{t}=\,\texttt{mask}\right\}} \!\!\!\!\!\!\!\!\!\log p_{\bm{\theta}}(x_i\mid\bm{x}^{t})\Biggr]\,\mathrm{d}t \defeq {\color{Blue}\mathcal{L}_{\text{MD}}(}\bm{x}{\color{Blue};}\,\bm{\theta}{\color{Blue})}.
\label{eq:diffusion-elbo}
\end{equation}
The integral can be equivalently written as the expectation over $t \sim \mathcal{U}(0, 1)$, thus, it can be directly used as a training objective when estimated by Monte-Carlo sampling \citep{metropolis1949monte}. Such a Monte-Carlo estimate can also be used at inference-time for likelihood-based evaluation, similarly to \Cref{eq:ar}. Note that the resulting objective is very similar to the one used to train masked language models such as BERT \citep{devlin-etal-2019-bert}.

\section{Dual-objective language modeling}\label{sec:objective}

The method of combining autoregressive and masked (diffusion) objectives is mostly based on the earlier GPT-BERT approach by \cite{charpentier-samuel-2024-bert}. They showed promising results for very small language models trained within the limitations of the BabyLM Challenge \citep{conll-2024-babylm}. We extend their approach to masked-diffusion language models and to orders of magnitude larger computation scale.

Dual-objective language models are trained by minimizing the following combined loss function, which is further explained below in more detail:
\begin{equation}
        \mathop{\text{argmin}}_{\bm{\theta}}\mathop{\mathbb{E}}_{\raisebox{-0.3pt}{$_{\bm{x}\, \sim\,\mathcal{D}}$}} \Bigl[{\color{Red}\alpha} {\color{Blue}\mathcal{L}_{\text{AR}}(}\bm{x}{\color{Blue};}\,\bm{\theta}{\color{Blue})} + {\color{Red}(1-\alpha)}{\color{Blue}\mathcal{L}_{\text{MD}}(}\bm{x}{\color{Blue};}\,\bm{\theta}{\color{Blue})}\Bigl].
\end{equation}

\begin{figure}[h!]
    \centering
    \includegraphics[width=\linewidth]{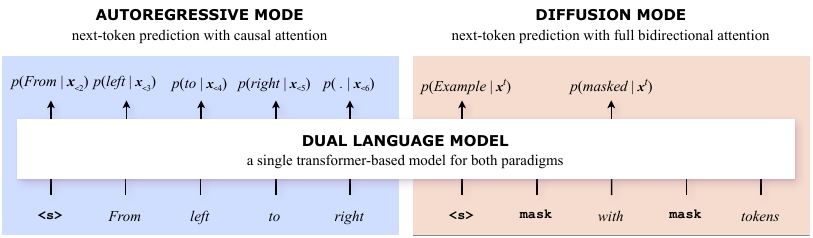}
    \caption{\textbf{Two modes of operation inside a single model.} We use the same transformer architecture with the same parameters to do both diffusion and autoregression language modeling, the only difference between the two modes is the input sequence and the attention mask.}
    \label{fig:dual-lm}
\end{figure}

    \paragraph{Loss weighting} The balance between the autoregressive objective ${\color{Blue}\mathcal{L}_{\text{AR}}}$ and masked-diffusion objective ${\color{Blue}\mathcal{L}_{\text{MD}}}$ is controlled by the hyperparameter ${\color{Red}\alpha}$. It is crucial for controlling the trade-off between training efficiency and overfitting robustness; its relation to the number of data repetitions is extensively tested by the following experiments.
    
    In practice, naively mixing both objectives within a single batch could result in reduced throughput. For this reason, we assign each GPU device to a single objective so that the computation graph remains simple and static, and can be efficiently compiled. To be specific, we distribute the training of each model across $256$ devices, which allows for choosing between $256+1$ values: ${\color{Red}{\alpha}} \in \{\nicefrac{i}{256}\,\vert\,i=0,1,\dots256\}$.

    \paragraph{Diffusion as next-token prediction} Our goal is to align ${\color{Blue}\mathcal{L}_{\text{AR}}}$ and ${\color{Blue}\mathcal{L}_{\text{MD}}}$ so that they can be parameterized by a single transformer model. For this reason, we use a slightly modified version of masked language modeling called \textit{masked next-token prediction} \citep[MNTP;][]{lv-etal-2024-analysis}. With this approach, the model always uses the hidden state at position $i$ to predict the next token at position $i+1$ (we prove that this parameterization is as expressive as the standard approach in \Cref{app:proof}). In this way, both modes of operation are unified as they both perform next-token prediction, as illustrated in \Cref{fig:dual-lm}. MNTP has also been used in recent work for adapting a masked diffusion model from an autoregressive checkpoint \citep{gong2025scaling, ye2025dream7bdiffusionlarge}.
    
    \paragraph{Standard transformer architecture} The main benefits of using masked next-token prediction are that we can use exactly the same transformer architecture as standard autoregressive models, and we can optimize its parameters with both objectives at the same time. The only difference between the two modes of operation is the inputs -- they are either (partially) masked inputs with empty (fully bidirectional) attention masks, or full unchanged inputs with causal (unidirectional) attention masks.

\section{Evaluation}\label{sec:evaluation}

While it is common practice to only consider the value of loss on a held-out set when evaluating language models \citep{kaplan2020scalinglawsneurallanguage, 10.5555/3600270.3602446,NEURIPS2023_9d89448b}, it is important to measure the actual downstream performance to accurately assess the effect of different training configurations. This is especially crucial when training with two incompatible training losses. That being said, we also report validation losses in \Cref{app:validation}.

\paragraph{Tasks} We evaluate our models on nine standard language modeling tasks in a zero-shot fashion. All tasks consist of a context (which can be empty) and multiple different completions where one is correct and the others are incorrect. We evaluate the sum of the log-likelihood of each completion and assign the completion with the maximum sum as the prediction of the model. \Cref{tab:tasks} lists the tasks:

\renewcommand{\arraystretch}{1.2}
\begin{table}[!h]
    \centering
    \caption{\textbf{The list of evaluation tasks.} The ARC$^\dagger$ datasets contain some examples with 3 or 5 completions rather than 4. All tasks are evaluated zero-shot.\vspace{0.5em}}
    \small
    \begin{tabular}{@{}lrrrr@{}}
        \toprule
        \textbf{Task} & \textbf{\# Examples} & \textbf{\# Completions} & \textbf{Split} & \textbf{Reference} \\
        \midrule
         ARC-Easy (ARC-E) & $2\,376$ & $4^\dagger$\hspace{-0.4em} & test & \cite{clark2018thinksolvedquestionanswering} \\
         ARC-Challenge (ARC-C) & $1\,172$ & $4^\dagger$\hspace{-0.4em} & test & \cite{clark2018thinksolvedquestionanswering} \\
         BLiMP & $67\,000$ & $2$ & --- & \cite{warstadt-etal-2020-blimp-benchmark} \\
         Commonsense QA (CSQA) & $1\,221$ & $5$ & val & \cite{talmor-etal-2019-commonsenseqa} \\
         HellaSwag (HSwag) & $10\,042$ & $4$ & val & \cite{zellers-etal-2019-hellaswag} \\
         MMLU & $14\,042$ & $4$ & test & \cite{hendrycks2021measuring} \\
         OpenBook QA (OBQA) & $500$ & $4$ & test & \cite{mihaylov-etal-2018-suit} \\
         Physical Interaction QA (PIQA) & $1\,838$ & $2$ & val & \cite{Bisk_2020} \\
         Social IQa (SIQA) & $1\,954$ & $3$ & val & \cite{sap-etal-2019-social} \\
         \bottomrule
    \end{tabular}
    \label{tab:tasks}
\end{table}

\paragraph{Evaluation setup}

We follow the guidelines of the OLMES paper \citep{gu-etal-2025-olmes} for the normalization of our log-likelihood estimations as well as the prompt format, with two changes: 1) we only evaluate in a zero-shot fashion to simplify the setup, 2) we only consider their ``cloze'' formulation of each task, which is more suitable for smaller models. For the BLiMP task, which is not considered in the OLMES evaluation suite, we do not apply any length normalization and score each sample with the raw log-likelihood score. Since the BLiMP and MMLU tasks contain multiple sub-tasks (67 for BLiMP, and 57 for MMLU), we report their macro-average as the final score. More information on how each task is normalized can be found in \cref{app:norm}.

\paragraph{Normalized score averaging} To ensure a fair aggregation of the different task scores, we first normalize the scores such that the random baseline of each task is at $0$ and the maximum is at $1$; similarly to the Open LLM Leaderboard \citep{open-llm-leaderboard-v2}. To achieve this we apply the following formula to our scores: $\operatorname{score}(x, t) = \nicefrac{(x-r_t)}{(m_t-r_t)}$, where $x$ is the raw score, $r_t$ is the random baseline and $m_t$ is the optimal score for task $t$. We then take the simple average of the normalized scores across all tasks as the final performance of our model.

\subsection{Autoregressive (unidirectional) evaluation}\label{sec:ar_eval}

To evaluate the autoregressive capabilities of our models, we use \cref{eq:ar} to estimate the log-likelihood of each completion. Specifically, given a completion ($\bm{w}$) and context ($\bm{c}$), we calculate the conditional log-likelihood as $\log p_{\bm{\theta}}(\bm{w}\,|\,\bm{c}) = \sum_i\log p_{\bm{\theta}}(w_i \mid \bm{c}, \bm{w}_{<i})$.

\subsection{Masked-diffusion (bidirectional) evaluation}

One possibility to evaluate the masked-diffusion capabilities of our models is to also leverage the training objective in \cref{eq:diffusion-elbo} and estimate the conditional log-likelihood of each completion by Monte-Carlo sampling. We describe this approach in more detail in \Cref{app:monte}. While it provides accurate downstream scores, it is computationally expensive and less accurate than using simpler pseudo log-likelihood \citep[PLL;][]{wang-cho-2019-bert,salazar-etal-2020-masked,10.5555/3737916.3738000} estimation.

PLL allows us to do bidirectional evaluation more than ten times faster while being more accurate than Monte-Carlo sampling (\cref{app:mntp_diff}). Therefore, we use PLL for evaluating the bidirectional capability of our models. We fully describe this method in \Cref{app:pll}. As visualized in \Cref{fig:masked_log} on the left, we specifically use the semi-autoregressive variation of PLL proposed by \cite{10.5555/3737916.3738000}.

\begin{figure}[h!]
    \centering
    \includegraphics[width=\linewidth,trim={0 0 0 0},clip]{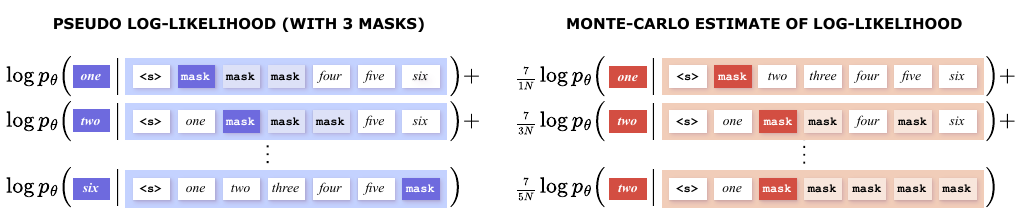}
    \caption{\textbf{Visual representations of bidirectional evaluation methods.} Pseudo log-likelihood estimation (on the left) reaches accurate likelihood scores substantially faster than the (theoretically grounded) Monte-Carlo estimation (on the right).}
    \label{fig:masked_log}
\end{figure}

\section{Experiments}

\subsection{Pretraining setup}\label{sec:pretrain}

We train each 470-million-parameter language model (with 360M non-embedding weights) on 32 billion tokens in total. A repetition factor $R$ means we sample a unique subset of size $\nicefrac{\text{32B}}{R}$ tokens and repeat it $R$ times during training. This total token budget is more than $4\times$ past the Chinchilla compute-optimal point \citep{10.5555/3600270.3602446}; we specifically decided to conduct the experiments in this regime as it reflects how modern language models are trained in practice. This compute budget is also large enough to induce non-trivial zero-shot downstream performance, enabling us to measure clear differences between different configurations.

\paragraph{Model architecture} The language models have 24 layers with hidden size of 1\,024, their self-attention operations are divided into 16 parallel heads, the feed-forward modules have intermediate size of 3\,554, and the vocabulary is set to 51\,200 tokens. As for the architecture itself, we follow the usual modifications of the original transformer recipe \citep{NIPS2017_3f5ee243} -- pre-normalization \citep{nguyen-salazar-2019-transformers} with RMSNorm \citep{10.5555/3454287.3455397}, rotational positional embedding \citep{10.1016/j.neucom.2023.127063} and Swish-gated linear units \citep{ramachandran2018searching, shazeer2020gluvariantsimprovetransformer}.

\paragraph{Optimization} The parameters are optimized by the Muon optimizer for faster convergence \citep{jordan2024muon}, specifically its variation proposed by \cite{liu2025muonscalablellmtraining}. The learning rate is set to 0.007 and decayed according to the warmup-stable-decay \citep[WSD;][]{hagele2024scaling} schedule (without warmup steps and 2\,048 steps of linear decay). In total, each model is trained for 8\,192 steps with 4M tokens in each global batch and with a sequence length of 2\,048 tokens. The optimization is regularized by weight decay (with strength of $10^{-1}$) and by an auxiliary z-loss term \citep[with strength of $10^{-4}$;][]{chowdhery2022palmscalinglanguagemodeling}.

\paragraph{Training corpus and tokenizer} Even though we limit the training data to 32B tokens, we deliberately choose a text corpus that is not excessively filtered and that is representative of large-scale web crawls used in practice. We randomly sample English documents with 32B tokens in total from the HPLT v2 corpus \citep{burchell-etal-2025-expanded}, which combines extracted webpages from the Internet Archive and CommonCrawl. We also use a smaller disjoint subset to monitor the validation loss. To prevent a potential bias from using an external tokenizer, we train a standard byte-level BPE tokenizer \citep{10.5555/177910.177914} with 51\,200 subwords directly on the full training data.

\begin{figure}[h!]
    \centering
    \includegraphics[width=\linewidth]{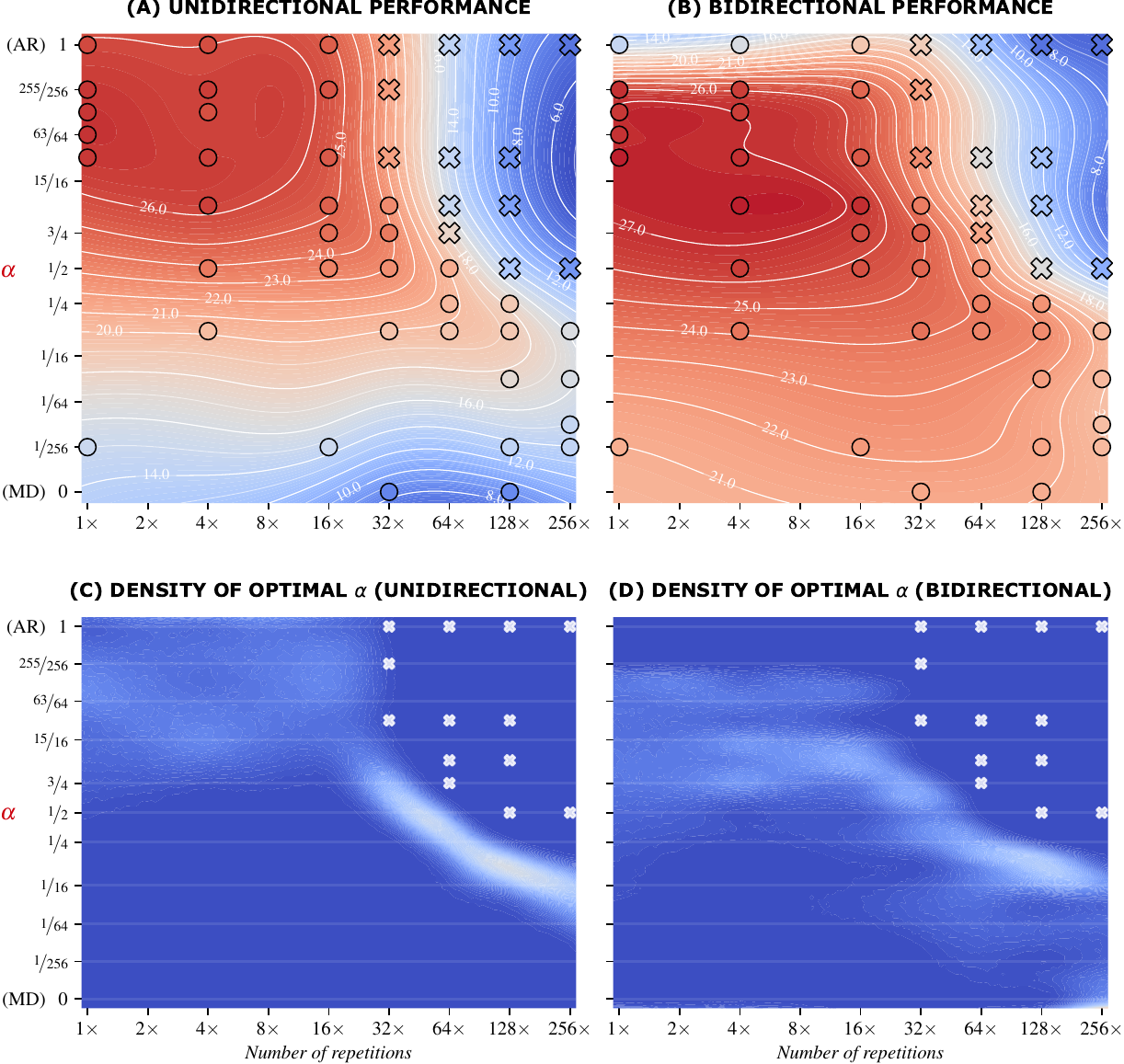}
    \caption{\textbf{Interpolated unidirectional and bidirectional results.} The (a) and (b) figures on top show the relation between repetitions (x-axis) and the autoregressive-diffusion weight ${\color{Red}\alpha}$ (y-axis); the contours follow the Gaussian process model that interpolates the average performance of language models trained according to the specified settings. The respective results are plotted either as crosses when the model overfitted during training, or as circles. The (c) and (d) figures below visualize the estimated probability that a particular ${\color{Red}\alpha}$ (y-axis) is optimal for a given number of repetitions (x-axis).}
    \label{fig:contour}
\end{figure}

\subsection{Searching for the optimal ${\color{Red}\alpha}$}

We trained and evaluated 50 language models in total, the results are plotted in \Cref{fig:contour}. In order to deal with the noisy nature of this data and to better understand the relation between the amount of data repetitions and the optimal ${\color{Red}\alpha}$, we use simple statistical models.

\paragraph{Interpolation with Gaussian process} Specifically, we use Gaussian process regression \citep[GPR;][]{NIPS1995_7cce53cf} with a composite kernel structure to model the relationship between data repetitions, ${\color{Red}\alpha}$ and downstream performance. The composite kernel consists of a constant kernel multiplied by an anisotropic Matérn kernel \citep[$\nu=1.5$;][]{stein1999interpolation} combined additively with a white noise kernel to account for observation noise. The input features are standardized to zero mean and unit variance, and the output features are normalized. The kernel parameters are optimized by L-BFGS-B \citep{10.5555/3112655.3112866} using \texttt{SciPy} \citep{2020SciPy-NMeth}. The resulting interpolations in \Cref{fig:contour} show regular structure while closely fitting the data with $R^2$ over 0.99 in all cases.

\paragraph{The optimal autoregressive-diffusion ratios} The fitted Gaussian process is a probabilistic model of the downstream performance with regard to the amount of data repetition and ${\color{Red}\alpha}$. Thus, we can transform this to the probability that a particular ${\color{Red}\alpha}$ is optimal for the given data repetition. More concretely, we can estimate the density of this distribution by sampling from the posterior of the GPR model. The result of this is visualized in the bottom part of \Cref{fig:contour}.

\subsection{Results and discussion} 

The structure of \Cref{fig:contour} becomes clearer once we identify which training settings result in overfitting during training.\footnote{Here, \textit{overfitted training runs} are those runs, in which the held-out loss starts diverging while the training loss keeps converging (\Cref{app:validation}). Such runs are highlighted in \Cref{fig:contour} by $\bm{\times}$ marks.} The density of optimal ${\color{Red}\alpha}$ weights highlights that there are two regions to consider: \textit{1) Regular-data region} where a language model trained solely on the autoregressive objective does not overfit -- this roughly corresponds to 16 repetitions of training data and less, as also shown by \cite{NEURIPS2023_9d89448b}. \textit{2) Data-constrained region} -- roughly corresponding to 32 data repetitions and more -- where overfitting is an important consideration.

In the first case, it is clearly beneficial to put more weight on the autoregressive training than on masked-diffusion. Yet, training only autoregressively does not lead to any improvement in any experiments within the regular-data region. Even when evaluated purely autoregressively, the differences between ${\color{Red}\alpha}$ set to $1$ and $\nicefrac{15}{16}$ are negligible. Switching to bidirectional evaluation, the single-objective ${\color{Red}\alpha}=1$ performs poorly while all models trained with ${\color{Red}\alpha}$ values between $\nicefrac{255}{256}$ and $\nicefrac{15}{16}$ perform similarly -- notably, they all substantially outperform models trained only with masked-diffusion. This is a key finding: even without any data constraint, the dual-objective models achieve stronger masked-diffusion performance than pure masked-diffusion training, despite dedicating only a small fraction of training to the masked-diffusion objective. We hypothesize that the prevalence of the autoregressive objective leads to fast convergence, and that the small amount of masked-diffusion balances its slower convergence by inducing useful modeling priors. This leads us to formulating the first practical recommendation:

\vspace{0.5em}
\begin{remark}{\hspace{1em}Language modeling under regular data settings}{rem:regular}
When training a language model in a regular data setting (16 repetitions or less), train with a small amount of masked-diffusion objective (${\color{Red}\alpha}\approx\nicefrac{63}{64}$) to achieve stronger bidirectional performance than pure masked-diffusion training without losing any autoregressive performance.
\label{rem:regular}
\end{remark}
\vspace{0.5em}

In the second data-constrained case, the relation between data repetition, ${\color{Red}\alpha}$, and final performance seems more complicated. We risk overfitting by putting too much weight on autoregression and underfitting by focusing too much on masked-diffusion; as evident from \Cref{fig:contour}, the interval of optimal ${\color{Red}\alpha}$ values is fairly narrow. On the other hand, the optimal values are surprisingly similar for the unidirectional and bidirectional performance. We can notice that the region of optimal ${\color{Red}\alpha}$ values is right beneath the region of ${\color{Red}\alpha}$ values that lead to overfitting, but the question is how to identify such an ${\color{Red}\alpha}$. It is possible to have an alternative interpretation of the autoregressive-diffusion weights and count the number of data repetitions that each objective is individually trained on -- then we can see that more than 32 autoregressive repetitions lead to overfitting while fewer than 8 autoregressive repetitions lead to underfitting. Thus, based on the empirical results, our recommendation for this scenario is:

\vspace{0.5em}
\begin{remark}{\hspace{1em}Data-constrained language modeling}{rem:constrain}
When training a language model in a data-constrained setting (more than 32 repetitions), choose~${\color{Red}\alpha}$ that exposes the autoregressive objective to roughly 16 repetitions of the training data.
\label{rem:constrain}
\end{remark}
\vspace{0.5em}

\renewcommand{\arraystretch}{1.0}
\begin{table}[h]
    \centering
    \small
    \caption{\textbf{The normalized autoregressive performance of selected models.} We show the results on all nine evaluated tasks for three repetition values; each repetition group contains the results of the best-performing ${\color{Red}\alpha}$ and of the autoregressive-only model. The scores for each task are normalized so that 0\% corresponds to random baseline and 100\% is the perfect score. The best result for each dataset size is boldfaced.\vspace{1em}}
    \begin{tabular}{@{}lccccccccc@{\hspace{2.25em}}c@{}}
        \toprule
        \raisebox{0em}{\textbf{Model configuration}} & \rotatebox{90}{\textbf{ARC-C}} & \rotatebox{90}{\textbf{ARC-E}} & \rotatebox{90}{\textbf{BLiMP}} & \rotatebox{90}{\textbf{CSQA}} & \rotatebox{90}{\textbf{HSwag}} & \rotatebox{90}{\textbf{MMLU}} & \rotatebox{90}{\textbf{OBQA}} & \rotatebox{90}{\textbf{PIQA}} & \rotatebox{90}{\textbf{SIQA}} & \rotatebox{0}{\textbf{Average}} \\
        \midrule

         \scriptsize{\textsc{1 repetition}} \\
         \hspace{1em}Dual (${\color{Red}\alpha}=\nicefrac{63}{64}$)  & \hphantom{0}5.7 & 28.6 & \textbf{63.7} & \textbf{35.1} & 31.1 & \textbf{\hphantom{0}4.9} & \textbf{17.6} & \textbf{40.9} & 14.3 & \textbf{26.9} \\
         \hspace{1em}Autoregressive (${\color{Red}\alpha}=1$) & \textbf{\hphantom{0}5.9} & \textbf{30.3} & 61.3 & 33.5 & \textbf{31.7} & \hphantom{0}3.8 & 13.6 & 39.4 & \textbf{15.2} & 26.1 \\[0.5em]

         \scriptsize{\textsc{32 repetitions}} \\
         \hspace{1em}Dual (${\color{Red}\alpha}=\nicefrac{3}{4}$) & \hphantom{0}3.3 & \textbf{28.0} & \textbf{57.9} & \textbf{31.1} & \textbf{26.4} & \hphantom{0}3.6 & \textbf{14.4} & \textbf{36.1} & \textbf{14.6} & \textbf{23.9} \\
         \hspace{1em}Autoregressive (${\color{Red}\alpha}=1$) & \textbf{\hphantom{0}5.0} & 24.9 & 53.3 & 28.5 & 25.4 & \textbf{\hphantom{0}3.8} & \hphantom{0}9.9 & 33.3 & 14.2 & 22.0 \\[0.5em]

        \scriptsize{\textsc{128 repetitions}} \\
         \hspace{1em}Dual (${\color{Red}\alpha}=\nicefrac{1}{8}$) & \textbf{\hphantom{0}1.7} & \textbf{23.6} & \textbf{56.1} & \textbf{24.8} & \textbf{14.2} & \textbf{\hphantom{0}1.6} & \textbf{\hphantom{0}8.5} & \textbf{28.1} & \textbf{13.3} & \textbf{19.1} \\
         \hspace{1em}Autoregressive (${\color{Red}\alpha}=1$) & \,-1.0 & 12.3 & 33.2 & \hphantom{0}6.8 & \hphantom{0}8.1 & \hphantom{0}1.1 & \,-0.5 & 15.8 & \hphantom{0}8.9 & \hphantom{0}9.4 \\
         
         \bottomrule
    \end{tabular}
    \label{tab:performance}
\end{table}

\paragraph{Generalization to larger language models} An obvious question is whether the recommendations hold even at much bigger scale for larger language models. Reliably answering this question would require expensive experimentation, but we believe that the conclusions hold for two reasons. Firstly, according to our results, the optimal ${\color{Red}\alpha}$ values are clearly correlated with overfitting of autoregressive language models. Since the overfitting behavior does not depend on model size according to previous work \citep{NEURIPS2023_9d89448b,prabhudesai2025diffusionbeatsautoregressivedataconstrained}, we believe that the optimal ${\color{Red}\alpha}$ values should also not change. Secondly, the relative burden of representing two modes of operation within the learned parameters decreases with model size, so we believe that the benefit of the dual training objective should actually increase with model size.

\paragraph{Detailed results} To put the abstract average scores into another perspective, we look at the individual (normalized) scores per task in \Cref{tab:performance}. The results show that the improvement in performance from using a dual objective is observed on a majority of tasks. This is especially true the more repetitions there are. The detailed scores also highlight how effectively the dual objective learns from limited data, reaching nontrivial performance even when exposed to just 256M tokens of training data (under 128 repetitions). We observe similar trends for masked-diffusion evaluation except that as the number of repetitions decreases, the performance gap increases rather than decreases. Detailed performance for the masked-diffusion evaluation can be found in \cref{app:mntp_perf}.

\subsection{Generalization to prefix language modeling}

Prefix language modeling \citep{NEURIPS2019_c20bb2d9, 10.5555/3455716.3455856, pmlr-v162-wang22u} is a promising alternative to the two training objectives investigated in this work. It processes the conditioning part (prefix, $\bm{c}$ in notation from \Cref{sec:ar_eval}) of a text fully bidirectionally while the completion part ($\bm{w}$ in \Cref{sec:ar_eval}) is processed autoregressively. Given that our models are trained with both unidirectional and bidirectional attention, we test whether the exposure to both can induce generalization to prefix language modeling without any further training. We repeat the earlier autoregressive evaluation with prefix attention masks and plot the results in \cref{fig:prefix}.

\begin{figure}[h!]
    \centering
    \includegraphics[width=\linewidth]{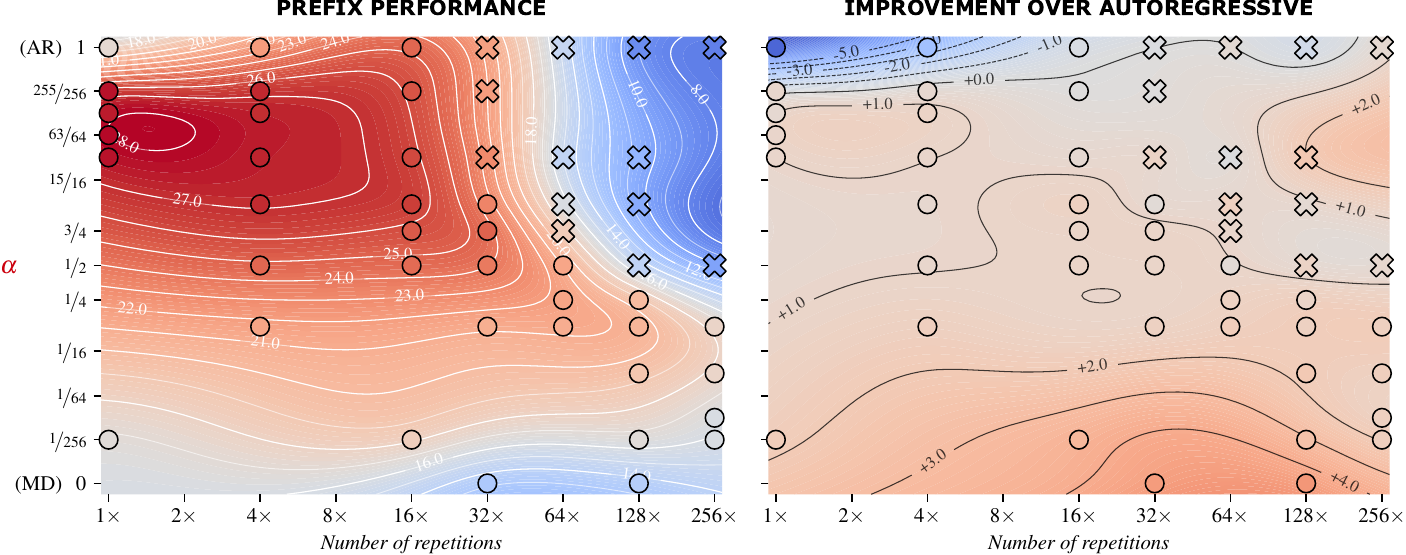}
    \caption{\textbf{Interpolated prefix results.} The figures show the relation between data repetitions (x-axis), ${\color{Red}\alpha}$ (y-axis), and downstream performance (color-coded). The individual results are interpolated by a GPR model. The right figure demonstrates the relative improvement of prefix-masked evaluation compared to fully unidirectional evaluation (blue color denotes decreased performance and red color denotes a performance increase).}
    \label{fig:prefix}
\end{figure}

The right side of \cref{fig:prefix} shows the overall improvement of the prefix evaluation over the autoregressive one. Notably, we can see that it is reliably over one percentage point better across most configurations that combine both training objectives. This finding leads to our third recommendation:

\vspace{0.5em}
\begin{remark}{\hspace{1em}Induced prefix language modeling}{remark:3}
    The autoregressive performance of dual-objective language models can be reliably improved at inference time by processing the conditional part of a prompt fully bidirectionally.
\label{remark:3}
\end{remark}
\vspace{0.5em}

\section{Related work}

\paragraph{Combining autoregressive and masked (diffusion) language modeling} This paper builds upon the GPT-BERT training objective by \cite{charpentier-samuel-2024-bert}, validating its effectiveness in a more practical setting. However, there is a long history of papers that tried to combine bidirectional masked language modeling with unidirectional autoregressive modeling: T5 \citep{10.5555/3455716.3455856} and BART \citep{lewis-etal-2020-bart} were the first to train with autoregressive fill-in-the-blank training objectives by relying on encoder-decoder transformer architectures. Later, \cite{du-etal-2022-glm} proposed GLM, which uses the same objective as T5 while using a simpler decoder-only architecture with a complicated scheme of positional encodings. CM3 by \cite{aghajanyan2022cm3} further simplifies training by not requiring any non-standard architectural modifications like the previous work. As they also add autoregressive language-modeling objective, their work is close to our approach -- a model trained with CM3 can be used as any other autoregressive model at inference time, similarly to us. However, our objective also generalizes masked-diffusion language modeling and allows for fine-grained balance of the two objectives throughout training. More recently AntLM by \citet{yu-etal-2024-antlm} proposed to switch from one objective to the other in a curriculum fashion, starting with a short autoregressive training, followed by a long masked language training and finishing on another short autoregressive training. While this does show promise, the transition from one objective to the other leads to forgetting of the previous objective whereas our objective continuously learns both objectives. Other notable works include prefix language models \citep{NEURIPS2019_c20bb2d9, 10.5555/3455716.3455856, pmlr-v162-wang22u} and UL2 \citep{tay2023ul}.

\paragraph{Scaling of autoregressive and masked-diffusion models} Concurrent works by \cite{prabhudesai2025diffusionbeatsautoregressivedataconstrained} and \cite{ni2025difflm} have demonstrated that masked-diffusion models outperform autoregressive models in data-constrained training regimes. Our results confirm their findings but we show that using either of these training objectives is never optimal -- combining them together should always be better, not only in data-constrained settings.

\paragraph{Bidirectional masking of user and system prompts} A recent paper by \citet{katz-etal-2025-segment} shows that using a bidirectional mask on user and system prompts improves performance on a wide variety of tasks, in line with \Cref{remark:3}. However, for models to be able to use such masks, the authors first need to train adapters. Our work shows that by training both autoregressive and masked-diffusion at the same time, we are able to induce the prefix mask without any additional training.

\paragraph{Data-constrained scaling laws} \cite{NEURIPS2023_9d89448b} studies the scaling laws of autoregressive models in data-constrained settings with a similar motivation to this paper. They show that autoregressive models cannot meaningfully learn from more than 16 data repetitions -- we demonstrate that this value is at least an order of magnitude larger when training with the dual objective.

\paragraph{Autoregressive diffusion} Our work shares motivation with the autoregressive-diffusion models proposed by \cite{10.5555/3666122.3667859}. The diffusion process in that work is biased towards left-to-right denoising, which improved the decoding efficiency of the diffusion language models at that time. Similarly, \cite{arriola2025block} speeds up decoding of masked-diffusion models by autoregressively generating chunks of tokens where each chunk is decoded by a diffusion process. In both cases, the resulting models are still diffusion models -- albeit faster; these approaches do not generalize over autoregressive and masked-diffusion language modeling as our method.

\paragraph{Fair MD-AR comparison} The recent work by \cite{xue2025anyorder} modifies masked-diffusion language models by parameterizing them with causally-masked transformers, which makes the diffusion models more comparable to standard autoregressive models -- decoupling their architectural differences from differences in training objectives. Their conclusion is that masked diffusion alone is a suboptimal objective for language, which is also confirmed by our experiments (\Cref{fig:contour}). However, we found that by simply combining both objectives, we can get the benefits of diffusion without losing any performance.

\paragraph{Approaching the data wall} Large language models are known to reliably follow the empirical \textit{scaling laws} that describe how their performance should improve with increased compute, model size, and training data. \cite{kaplan2020scalinglawsneurallanguage} first demonstrated these relationships, showing how the training loss decreases as a power law with respect to these three parameters. These laws were later refined by \citet{10.5555/3600270.3602446}, who showed that compute-optimal training requires scaling data and model size together. Related to our work, the scaling laws reveal a fundamental problem: achieving each incremental gain in performance requires exponentially more training data. Thus, data-constrained language modeling is quickly becoming a relevant field of study even for high-resource languages such as English.

\section{Conclusion}

In this work, we addressed the fundamental trade-off between the training efficiency of autoregressive models and the overfitting resilience of masked-diffusion models. We have empirically demonstrated that a dual-objective training strategy successfully achieves the best of both worlds, resulting in models that converge rapidly without any performance degradation in data-constrained settings. Crucially, because this unification requires no architectural changes, the resulting models incur no inference overhead and can be deployed as standard autoregressive transformers.

We established that combining objectives is universally beneficial and derived practical guidelines for selecting the optimal ${\color{Red}\alpha}$ based on the degree of data repetition. Furthermore, we observed that the diffusion objective induces robust prefix language modeling capabilities, leading to superior performance on downstream tasks compared to standard autoregressive baselines. While training on hundreds of data repetitions may seem extreme today, the asymmetry between exponentially scaling compute budgets and the finite supply of high-quality text suggests that data constraints will become increasingly relevant for frontier model development. Our findings indicate that dual-objective training provides a robust and compute-efficient path forward that retains standard inference capabilities as the field approaches these fundamental limits.

\vspace{1em}

\subsubsection*{Reproducibility Statement}
To ensure reproducibility of our work we provided the guidelines on how to train language models on both objectives at the same time in \cref{sec:objective}. For our model parameters and hyperparameters we specified those in \cref{sec:pretrain}. We describe how we perform the evaluations, the number of mask tokens used for PLL, the prompt formats, and log-likelihood normalizations in \cref{sec:evaluation}, \cref{app:norm}, and \cref{app:pll}. We openly release our custom training and evaluation code at {\footnotesize\url{https://github.com/ltgoslo/dual-language-models}}. The training code is based on the common and freely distributed deep-learning framework \texttt{PyTorch} \citep{10.5555/3454287.3455008}. The trained models are released openly under the Apache 2.0 license for further investigation at {\footnotesize\url{https://huggingface.co/ltg/dual-lm-470m}}.

\subsubsection*{Author Contributions}
Both authors have contributed equally and should be considered shared first authors of this manuscript.

\subsubsection*{Acknowledgments}

The computations were performed on resources provided through Sigma2 – the national research infrastructure provider for high-performance computing and large-scale data storage in Norway. We acknowledge Norway and Sigma2 for awarding this project access to the LUMI supercomputer, owned by the EuroHPC Joint Undertaking, hosted by CSC (Finland) and the LUMI consortium through project 465001890.

The efforts described in this paper were jointly funded by the University of Oslo and the HPLT project (High Performance Language Technologies; coordinated by Charles University).


\bibliography{iclr2026_conference}
\bibliographystyle{iclr2026_conference}

\newpage

\appendix
\section{The use of large language models}

Large language models have been used to provide feedback, fix grammatical errors and improve the writing in this paper; in particular, we used the Claude family of language models from {\footnotesize\url{https://claude.ai}}. In addition, we used the autocompletion tool from GitHub Copilot when writing the code used in this work.

\section{Erratum: Loss formulation}

The original version of this paper as well as the training code used the following formulation the loss function:
\begin{equation}
        \mathop{\text{argmin}}_{\bm{\theta}}\mathop{\mathbb{E}}_{\raisebox{-0.3pt}{$_{\bm{x}\, \sim\,\mathcal{D}}$}} \Bigl[{2\color{Red}\alpha} {\color{Blue}\mathcal{L}_{\text{AR}}(}\bm{x}{\color{Blue};}\,\bm{\theta}{\color{Blue})} + {\color{Red}(1-\alpha)}{\color{Blue}\mathcal{L}_{\text{MD}}(}\bm{x}{\color{Blue};}\,\bm{\theta}{\color{Blue})}\Bigl].
\end{equation}

The additional factor of two for ${\color{Blue}\mathcal{L}_{\text{AR}}}$ was supposed to balance the $\nicefrac{1}{t}$ term applied when computing ${\color{Blue}\mathcal{L}_{\text{MD}}}$ (recall \Cref{eq:diffusion-elbo}). This is however not mathematically correct (\Cref{eq:diffusion-elbo}) and unnecessary; the loss function is thus simplified in this updated version.

\section{Log-likelihood normalization}\label{app:norm}

For the BLiMP task, which is not considered in the OLMES evaluation suite, we do not apply any normalization and take the raw log-likelihood. We also stick to the no-context form of this task, where the whole sentence is considered the completion. We apply character length normalization to ARC-Easy, HellaSwag, MMLU, PIQA, and SIQA. Finally, we apply point-wise mutual information normalization \citep{holtzman-etal-2021-surface}, where the log-likelihood of the context-informed completion is divided by the log-likelihood of the uncontrained context completion, this can be seen in \cref{eq:pmi}, to ARC-Challenge, Commonsense QA, and OpenBook QA.
\begin{equation}
    \operatorname{PMI}(\bm{w}) = \sum_{i=1}^{|\bm{w}|}\log\left(\frac{p_{\bm{\theta}}\left(w_i\mid\bm{c} \oplus \bm{w}_{<i}\right)}{p_{\bm{\theta}}\left(w_i\mid\bm{u} \oplus \bm{w}_{<i}\right)}\right),
    \label{eq:pmi}
\end{equation}
where $\bm{w}$ is the completion, $\bm{c}$ is the context, and $\bm{u}$ is the unconstrained context (in our case, this would be ``Answer:'')

\section{Monte Carlo estimation of log-likelihood}
\label{app:monte}

To evaluate the masked-diffusion capabilities of our models, we use \cref{eq:diffusion-elbo} with the same modification as for the autoregressive evaluation as well as an adaptation of Monte-Carlo sampling to estimate the log-likelihood of each completion. Specifically, we estimate the following expected value:
\begin{equation}
    \int_0^1 \mathop{\mathbb{E}}_{\bm{x}^{t} \sim q_{t\mid0}(\cdot\mid\bm{x})} \Biggl[\frac{1}{t}\sum_{\left\{i\mid x_i^{t}=\,\texttt{mask}\right\}} \!\!\!\!\!\!\!\!\!\log p_{\bm{\theta}}(x_i\mid\bm{x}^{t})\Biggr]\,\mathrm{d}t = \mathop{\mathbb{E}}_{\substack{t \sim \mathcal{U}(0,1) \\ \bm{x}^{t} \sim q_{t\mid0}(\cdot\mid\bm{x})}} \Biggl[\frac{1}{t}\sum_{\left\{i\mid x_i^{t}=\,\texttt{mask}\right\}} \!\!\!\!\!\!\!\!\!\log p_{\bm{\theta}}(x_i\mid\bm{x}^{t})\Biggr]
\end{equation}

To reduce the variance of the estimation and get faster convergence, we take the expectation between $N$ equally spaced points between 0 and 1. instead of taking the expectation over $t \sim \mathcal{U}(0, 1)$. Yet, accurate estimation still requires $N\geq256$, which is unbearably slow -- especially when compared to simple autoregressive calculation of log-likelihood that requires only a single forward pass.

\section{Pseudo log-likelihood estimation}
\label{app:pll}

The base PLL equation can be described by a slight modification of \cref{eq:ar}:
\begin{equation}
    \begin{aligned}
    \log p_{\bm{\theta}}(\bm{w}) \approx \sum_{i=1}^{|{w}|}\log p_{\bm{\theta}}\bigl(w_i\mid \bm{c} & \oplus w_0 \oplus \cdots \oplus w_{i-1} \\[-1em]
    & \oplus \mathord{\texttt{mask}}\\
    & \oplus w_{i+1} \oplus \cdots \oplus w_{|\bm{w}|}\bigr)
    \end{aligned}
    \label{eq:pll_eval}
\end{equation}
This means that instead of doing a single forward pass, we need to do $|\bm{w}|$ forward passes to estimate the PLL. However, using a single mask token could lead to underestimating the log-likelihood of words split into multiple tokens. Therefore, we can further modify \cref{eq:pll_eval} to have a variable (but constant) number of mask tokens after the token we are trying to estimate:
\begin{equation}
    \begin{aligned}
    \log p_{\bm{\theta}}(\bm{w}) \approx \sum_{i=1}^{|\bm{w}|}\log p_{\bm{\theta}}\bigl(w_i \mid \bm{c} & \oplus w_0 \oplus \cdots \oplus w_{i-1} \\[-1em]
    & \oplus \texttt{mask} \oplus \cdots \oplus \texttt{mask} \\
    & \oplus\, w_{i+n} \oplus \cdots \oplus w_{|\bm{w}|}\bigr),
    \end{aligned}
\end{equation}
where $n$ represents the number of $\operatorname{[MASK]}$ tokens.
In our case, we take a combination of two different numbers of mask tokens (1 and 6), by taking the best score of the two for each task. The two values were chosen experimentally, more details on the results of each number of mask tokens can be found in \cref{app:num_tokens}.

\section{Proof of left-shift closure}
\label{app:proof}

This section proves that when we parameterize masked-diffusion language models as bidirectional transformers with shifted output, we do not lose any expressivity compared to standard non-shifted bidirectional models. We prove it constructively by defining a shift operation in the RASP language (which can then be compiled into an equivalent transformer model).

\vspace{0.5em}

\begin{definition}[RASP programs] The Restricted Access Sequence Processing language \citep[RASP;][]{pmlr-v139-weiss21a} is a sequence processing language that uses two types of variables: \textit{sequence operators} and \textit{selectors}; and two types of
operators: \textit{element-wise} and \textit{select-aggregate} operators. Valid \textit{programs} in RASP are operations on sequence operators formed by a finite composition of element-wise and select-aggregate operators.
\vspace{-0.5em}
\begin{itemize}[align=parleft,left=1em..2em]
\item \textit{Sequence operators} represent sequences of values (akin to hidden states in transformer models). \texttt{tokens} and \texttt{indices} are two pre-defined sequence operators; the first directly returns a sequence of the input tokens ($\texttt{tokens}(\texttt{"hello"}) = [\texttt{h}, \texttt{e}, \texttt{l}, \texttt{l}, \texttt{o}]$, and the second returns the positional indices ($\texttt{indices}(\texttt{"hello"}) = [0, 1, 2, 3, 4]$).
\item \textit{Selectors} are binary matrices (akin to attention matrices in transformers).
\item \textit{Element-wise operators} are arbitrary element-wise transformations on sequence operators (akin to feed-forward layers in transformers). For example $(\texttt{indices} + 2)(\texttt{"hello"}) = [2, 3, 4, 5, 6]$.
\item \textit{Select-aggregate operators} consist of two sequentially applied operators \texttt{select} and \texttt{aggregate} (corresponding to the attention operation).
\item $\texttt{select}(\bm{x}, \bm{y}, p)$ is an operator defined on two sequence operators $\bm{x}$ and $\bm{y}$, and an element-wise boolean operator $p$ defined on two sequence operators; the result is a selector matrix $\bm{M}$, where $M_{ij}=p(x_i, y_j)$. For example, $\texttt{select}([0, 1, 2], [1, 2, 3], <)$ results in a upper-triangular $3\times3$ binary matrix (selector).
\item $\texttt{aggregate}(\bm{M}, \bm{x}; c)$ is an operator defined on a selector $\bm{M}$, a sequence operator $\bm{x}$ and a default value $c$ (usually set to $0$ and omitted for convenience). It produces a sequence operator $\bm{y}$ such that:
\[
y_i=
\begin{cases}
    \frac{1}{\left|\left\{j:\,M_{ij}=1\right\}\right|}\sum_{j:\,M_{ij}=1}x_j, & \text{if }\,|\{j:\,M_{ij}=1\}| > 0,\\[0.5em]
    c, & \text{otherwise.}
\end{cases}
\]
\end{itemize}
\end{definition}
\vspace{0.75em}

\begin{fact}[RASP-transformer reduction]
\label{fact:rasp-transformer}
For every valid program written in RASP, there exists an equivalent fully-bidirectional transformer model that computes the same per-position operation; see \cite{pmlr-v139-weiss21a, 10.5555/3666122.3667771}.
\end{fact}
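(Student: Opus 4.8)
The plan is to prove \Cref{fact:rasp-transformer} by structural induction on the finite composition that defines a valid RASP program, exhibiting for each syntactic construct a corresponding transformer component and checking that the compositions line up; this is essentially the Tracr-style compilation argument of \cite{10.5555/3666122.3667771}. First I would set up the residual stream as a bookkeeping device: allocate to each sequence operator appearing in the program its own orthogonal block of coordinates in the model dimension, with the invariant that after the layer that computes node $v$, the value $v_i$ is stored in the $v$-block at every position $i$ — as a one-hot (categorical) encoding for finite-valued operators and as a scalar for numerical ones. The crucial preliminary observation is that, since the input alphabet is finite and RASP's element-wise and aggregate operators map finite value sets to finite value sets, every intermediate sequence operator takes values in a finite set; this is what makes an \emph{exact} (not merely approximate) compilation possible. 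The base cases are then immediate: $\texttt{tokens}$ is realized by the token embedding matrix and $\texttt{indices}$ by a positional embedding that writes the (suitably encoded) index into its block.

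Next I would handle the two inductive steps. An element-wise operator applied to already-computed operators is a finite function on the product of its inputs' (finite) value sets, hence a piecewise-constant map that a two-layer position-wise feed-forward block with enough hidden units computes exactly by a lookup-table construction — reading the relevant blocks and writing the result into the new one. A select-aggregate operator $\texttt{aggregate}(\texttt{select}(\bm{x},\bm{y},p),\bm{z})$ is realized by a single attention head: the key and query projections expose $\bm{x}$ and $\bm{y}$ so that the pre-softmax score at $(i,j)$ depends only on $(x_i,y_j)$ and can be made large exactly when $p(x_i,y_j)$ holds and small otherwise; driving the attention into the hard regime (equivalently, scaling these logits) makes the softmax converge to the uniform distribution over $\{j : p(x_i,y_j)\}$, which is precisely the averaging in the definition of $\texttt{aggregate}$, while the value projection carries $\bm{z}$ so that the head output at $i$ is $\frac{1}{|\{j:\,M_{ij}=1\}|}\sum_{j:\,M_{ij}=1} z_j$. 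Empty selections and the default value $c$ are handled by the standard beginning-of-sequence device: append a BOS token that is always selected and carries $c$ in the $\bm{z}$-channel, so the head never attends to an empty set.

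Finally I would compose: a valid program is a finite DAG of such operators, so topologically order its nodes and stack the corresponding feed-forward and attention layers in that order, padding with identity layers and using the residual connections to preserve blocks already written. The induction hypothesis guarantees that when the layer for node $v$ runs, the blocks of $v$'s parents already hold the correct values, so after the last layer the output block of the program's root holds the program's output at every position — and with the same weights for every input length, since position-wise feed-forward layers and attention are length-agnostic and all length dependence enters only through $\texttt{indices}$. This produces a fixed fully-bidirectional transformer computing the same per-position operation, which is the claim.

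I expect the main obstacle to be the exactness of the select step: forcing softmax attention to reproduce a hard $0/1$ selector and then a uniform average over the selected set requires either an idealized hard-attention limit or a finite-logit construction that is exact on the finite set of realizable $(x_i,y_j)$ patterns, together with care that layer normalization and the residual additions do not corrupt the categorical encodings stored in other blocks. The BOS-token gadget is what makes this robust, and verifying its correctness in all corner cases — empty selection, interaction between numerical and categorical channels, and the accumulated normalization constants — is the part that needs the most attention; the base cases, the feed-forward lookup tables, and the DAG bookkeeping are routine once the finite-value-set invariant is in place.
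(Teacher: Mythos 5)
The paper does not actually prove this statement: it is labeled a \emph{Fact} and imported wholesale from the cited works (the RASP paper of \cite{pmlr-v139-weiss21a} and the Tracr compiler of \cite{10.5555/3666122.3667771}), and your sketch is essentially the compilation argument given in those references — orthogonal residual-stream blocks per sequence operator, MLP lookup tables for element-wise operators, one attention head per select-aggregate, and the BOS gadget for empty selections. So your route coincides with the sources the paper leans on rather than with anything proved in the paper itself, and at that level it is the right construction.

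If, however, your sketch is read as a self-contained proof of the Fact \emph{as literally stated} (exact equivalence, one set of weights for every input length), there is a genuine gap, and it sits exactly where you suspected plus one place you did not: the finite-value-set invariant that your exactness argument rests on fails for numerical operators once the length is unbounded, because \texttt{aggregate} produces averages whose denominators are selection counts, so the set of reachable values grows with $n$ and the feed-forward lookup table cannot be a fixed finite table; and finite-logit softmax only approximates a hard $0/1$ selector followed by exact uniform averaging. The cited results accordingly hold either under an average-hard-attention idealization (Weiss et al.) or for a fixed maximum sequence length with arbitrarily small but nonzero error and layer-norm caveats (Tracr). Your closing claim of exact length-uniform weights therefore overreaches what the compilation delivers; the Fact should be read, as the paper implicitly does, modulo these idealizations — which is harmless for its only use here, since the \texttt{shift} program in \Cref{theorem:closure} needs a single selector that picks exactly one position, so the aggregation is a plain copy and no nontrivial averaging or numerical value set ever arises.
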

\vspace{0.75em}

\begin{definition}[$\Sigma$-realizable functions]
\label{def:realizable}
We consider programs defined on an input alphabet $\Sigma$ with a special token $\texttt{<s>}\in\Sigma$. A valid input sequence $\bm{x}=(x_1,x_2\dots x_n)\in\mathcal{X}$ is every sequence where $x_1=\texttt{<s>}$ and all $x_i\in\Sigma$. The output space $\mathcal{Y}$ is made of sequences $\bm{y}=(y_1,y_2\dots y_n)\in\mathcal{Y}$, where every element is a probability distribution over the alphabet $\Sigma$: that is all $y_i\in[0,1]^{|\Sigma|}$ and $\sum_j{(y_i)_{j}=1}$.

A function $f:\mathcal{X}\to\mathcal{Y}$ is $\Sigma$-\textit{realizable} if there exists a transformer whose output on every input $\bm{x}\in\mathcal{X}$ equals $f(\bm{x})$ position-wise. Let $\mathcal{R}_{\Sigma}$ be the class of all $\Sigma$-realizable functions.
\end{definition}
\vspace{0.75em}

\begin{theorem}[Left-shift closure]
\label{theorem:closure}
$\mathcal{R}_{\Sigma}$ is closed under unit left-shifts: for every $f\in\mathcal{R}_{\Sigma}$, there exists $g\in\mathcal{R}_{\Sigma}$ such that for all $\bm{x}\in\mathcal{X}\text{ and }i\in\left[1,n-1\right]\!:\,g(\bm{x})_i=f(\bm{x})_{i+1}$ (note that $f(\bm{x})_{1}$ and $g(\bm{x})_{n}$ are not constrained).
\end{theorem}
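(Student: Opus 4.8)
The plan is to realize $g$ by post-composing a transformer for $f$ with a small ``left-shift'' circuit that I build in RASP and then compile via \Cref{fact:rasp-transformer}. Fix a transformer $T$ with $T(\bm x)_i=f(\bm x)_i$ for every $\bm x\in\mathcal X$ and every position $i$; I will obtain $T'$ realizing $g$ by appending one extra attention block to $T$ that moves the content of position $i+1$ to position $i$. Note that pre-composition (applying a shift to $\bm x$ before $T$) would be wrong, since $f$ may depend on the whole sequence non-causally and a shifted input need not even lie in $\mathcal X$; post-composition is exactly what the statement asks for.

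For the shift block I work in RASP on a generic sequence operator $\bm s$, which after splicing will carry the per-position values $f(\bm x)_i$. Using the predefined \texttt{indices}, let $p$ be the element-wise boolean operator that is true on a pair $(a,b)$ precisely when $b=a+1$, and set $\bm S\defeq\texttt{select}(\texttt{indices},\texttt{indices},p)$, so $S_{ij}=1$ iff $j=i+1$. Then $\texttt{aggregate}(\bm S,\bm s;c)$ has $i$-th entry equal to $s_{i+1}$ whenever $i\le n-1$ — the selected set $\{j:S_{ij}=1\}$ is the singleton $\{i+1\}$, so the averaging in \texttt{aggregate} is trivial — and equal to the default $c$ when $i=n$, since then the selected set is empty. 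Choosing $c$ to be a fixed probability distribution over $\Sigma$ (say the uniform one) keeps the program's output in $\mathcal Y$ position-wise. This is a valid RASP program, so by \Cref{fact:rasp-transformer} its per-position computation is realized by transformer components; splicing these on top of $T$ — the positional information the selector needs is still available to the appended block — gives a transformer $T'$, using that transformers are closed under this kind of sequential composition. By construction $T'(\bm x)_i=\texttt{aggregate}(\bm S,f(\bm x);c)_i$, which is $f(\bm x)_{i+1}$ for $i\in[1,n-1]$ and $c$ for $i=n$; every entry is a probability distribution, so $T'$ realizes a function $g\in\mathcal R_\Sigma$ with the desired property.

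I expect the only genuinely delicate point to be the composition step — arguing that bolting the RASP-compiled shift on top of $T$ yields a bona fide transformer computing the composite function. This rests on two observations: a RASP sequence operator may carry arbitrary values, so feeding it the distributions produced by $T$ is legitimate; and the appended block's attention pattern depends only on positions, which remain accessible after $T$ (e.g.\ via the positional encoding), so nothing from $T$'s internals is needed. The boundary position $i=n$ is a second, minor issue: \texttt{aggregate}'s default mechanism already supplies a well-defined value there, and since the theorem leaves $g(\bm x)_n$ unconstrained I only need that value to be a valid distribution, which the choice of $c$ secures.
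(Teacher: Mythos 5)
Your proposal is correct and follows essentially the same route as the paper: build a RASP selector with $S_{ij}=1$ iff $j=i+1$ (your predicate-based \texttt{select} is equivalent to the paper's \texttt{select}(\texttt{indices}+1, \texttt{indices}, =)), apply \texttt{aggregate} to obtain the shift, compile it via \Cref{fact:rasp-transformer}, and post-compose with the transformer realizing $f$. The only cosmetic difference is the default value at position $n$ (you use a fixed uniform distribution, the paper uses $z_n$), and your extra care about the composition step and the boundary is consistent with, if slightly more explicit than, the paper's argument.
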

\begin{proof}
The proof constructs a suitable function $g\in\mathcal{R}_{\Sigma}$ for any $f\in\mathcal{R}_{\Sigma}$. The new function $g$ will mirror function $f$ and then shift its output so that $g(\bm{x})_i=f(\bm{x})_{i+1}$, the shift will be constructed in RASP so that $g$ is $\Sigma$-realizable.

Let $f\in\mathcal{R}_{\Sigma}$ be any $\Sigma$-realizable function and set $T_{\!f}$ as a fully-bidirectional transformer that realizes $f$, so $T_{\!f}(\bm{x})_i=f(\bm{x})_i$ for all valid inputs $\bm{x}\in\mathcal{X}$ and all positions $i\in[1,n]$.

First, we define a RASP selector $\bm{S}=\texttt{select}(\texttt{indices}+1,\ \texttt{indices},\ =)$, whose entries therefore satisfy $S_{ij}=1$ iff $j=i+1$ (each row $i$ selects exactly the next position $i+1$, and the last row selects none).

Then, for any sequence operator $\bm{z}$ (possibly vector-valued), we define a RASP program
$ \texttt{shift}(\bm{z})=\texttt{aggregate}(\bm{S},\,\bm{z};\, c)$, where $c$ is arbitrary and can be simply set to $z_n$. By construction of $\bm{S}$ and the definition of $\texttt{aggregate}$, we have $\texttt{shift}(\bm{z})_n=c=z_n$ and for every $i\in[1,n-1]$:
\begin{equation}
\texttt{shift}(\bm{z})_i
=
\frac{1}{\lvert\{j:\,S_{ij}=1\}\rvert}\sum_{j:\,S_{ij}=1}z_j
= z_{i+1}.
\end{equation}

Using \hyperref[fact:rasp-transformer]{Fact~\ref{fact:rasp-transformer}}, there exists a transformer $T_{\!\texttt{shift}}$ that computes the RASP program \texttt{shift}. Therefore, we can construct a transformer $T_{\!g}$ as $T_{\texttt{shift}} \circ T_{\!f}$. This corresponds to the function $g$ we are looking for -- $T_{\!g}$ operates in the same input and output space as $T_{\!f}$, so $g\in\mathcal{R}_{\Sigma}$; furthermore, this function satisfies for all $\bm{x}\in\mathcal{X}\text{ and }i\in\left[1,n-1\right]\!:\,g(\bm{x})_i=\texttt{shift}(f(\bm{x}))_i=f(\bm{x})_{i+1}$.
\end{proof}
\vspace{0.75em}
\begin{corollary}
\Cref{theorem:closure} implies that when we parameterize a masked-diffusion model with a shifted transformer, it is as expressive as the standard non-shifted parameterization. More specifically, masked diffusion is defined in \Cref{eq:diffusion-elbo}, and $p_{\bm{\theta}}(x_i\mid\bm{x}^{t})$ is typically implemented as a fully-bidirectional transformer model that outputs this probability at the $i$th position. When we set $\Sigma$ as our subword vocabulary, we get that the space of all possible transformer realizations of $p_{\bm{\theta}}(x_i\mid\bm{x}^{t})$ are the $\Sigma$-realizable functions $\mathcal{R}_{\Sigma}$ (\Cref{def:realizable}). \Cref{theorem:closure} shows that if we instead expect the output at the $(i-1)$th position, we do not lose any expressivity. Thus, transformer-based dual-objective language models are a generalization of standard masked-diffusion language models. Note that the left-shift closure in \Cref{theorem:closure} works up to the first token -- which is guaranteed to be the special \texttt{<s>} token in \Cref{def:realizable} as well as in the actual implementation.
\end{corollary}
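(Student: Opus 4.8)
The plan is to make the corollary's phrase ``as expressive as the standard non-shifted parameterization'' precise and then reduce everything to \Cref{theorem:closure}. Fix $\Sigma$ to be the subword vocabulary. Call \emph{standard} the parameterization in which a bidirectional transformer $T$ is read so that $T(\bm{x})_i$ is the predicted distribution of token $i$; by \Cref{def:realizable} the functions obtained this way are exactly $\mathcal{R}_\Sigma$. Call \emph{shifted} the parameterization in which the \emph{same} architecture is used but $T(\bm{x})_{i-1}$ is read as the predicted distribution of token $i$ for $i\in[2,n]$ (the coordinate that would ``predict'' $x_1=\texttt{<s>}$ is dropped, and the output at position $n$ is unused). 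Let $\mathcal{R}_\Sigma^{\rightarrow}$ denote the functions on positions $2,\dots,n$ realizable in the shifted way. The goal is to show that $\mathcal{R}_\Sigma$ and $\mathcal{R}_\Sigma^{\rightarrow}$ agree on positions $2,\dots,n$; plugging either parameterization into \Cref{eq:diffusion-elbo} then yields the same family of masked-diffusion models.

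First I would establish the inclusion from standard into shifted, which is immediate from \Cref{theorem:closure}: given $f\in\mathcal{R}_\Sigma$, the left-shift closure produces $g\in\mathcal{R}_\Sigma$ with $g(\bm{x})_i=f(\bm{x})_{i+1}$ on $[1,n-1]$, so if $T_g$ realizes $g$ in the standard sense then reading $T_g$ in the shifted sense gives, for token $i\in[2,n]$, the distribution $T_g(\bm{x})_{i-1}=g(\bm{x})_{i-1}=f(\bm{x})_i$; hence the restriction of $f$ to positions $2,\dots,n$ lies in $\mathcal{R}_\Sigma^{\rightarrow}$. This inclusion alone already justifies the word ``generalization'': every standard masked-diffusion model is matched by a dual/MNTP-parameterized model.

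Next I would prove the reverse inclusion via a symmetric \emph{right-shift} closure of $\mathcal{R}_\Sigma$, obtained by rerunning the proof of \Cref{theorem:closure} with the selector $\bm{S}'=\texttt{select}(\texttt{indices}-1,\ \texttt{indices},\ =)$, so that $S'_{ij}=1$ iff $j=i-1$; this gives a RASP program $\texttt{shift}'(\bm{z})=\texttt{aggregate}(\bm{S}',\bm{z};\,c')$ with $\texttt{shift}'(\bm{z})_i=z_{i-1}$ on $[2,n]$ and an arbitrary valid distribution $c'$ at position $1$, hence (via \hyperref[fact:rasp-transformer]{Fact~\ref{fact:rasp-transformer}}) a transformer $T_{\texttt{shift}'}$, and composing $T_{\texttt{shift}'}\circ T_h$ right-shifts any $h\in\mathcal{R}_\Sigma$. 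Now if a transformer $T$ is used in the shifted parameterization, then $T$ read in the standard sense realizes some $h\in\mathcal{R}_\Sigma$, and its right-shift $k\in\mathcal{R}_\Sigma$ satisfies $k(\bm{x})_i=h(\bm{x})_{i-1}=T(\bm{x})_{i-1}$ on $[2,n]$, which is exactly the shifted reading of $T$; so that function lies in $\mathcal{R}_\Sigma$ as well.

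Combining the two inclusions shows that the shifted (dual/MNTP) parameterization of $p_{\bm{\theta}}(x_i\mid\bm{x}^{t})$ realizes precisely the same per-token conditionals as the standard masked-diffusion parameterization on every position other than the first, and the first position always carries the fixed token $\texttt{<s>}$ (\Cref{def:realizable}), which is never the token being predicted under \Cref{eq:diffusion-elbo}; hence nothing is lost and transformer-based dual language models generalize standard masked-diffusion models. I expect no deep obstacle here: the heavy lifting is \Cref{theorem:closure}, which we assume, and the right-shift direction is its mirror image. The one point that needs care is the boundary bookkeeping — checking that the unconstrained coordinates line up under both shifts and that the guaranteed $\texttt{<s>}$ prefix makes the dropped position harmless — together with the routine fact, implicit in \hyperref[fact:rasp-transformer]{Fact~\ref{fact:rasp-transformer}}, that stacking two transformers yields a transformer, so that $T_{\texttt{shift}}\circ T_f$ and $T_{\texttt{shift}'}\circ T_h$ are themselves realizations.
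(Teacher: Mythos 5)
Your proposal is correct, and its essential step -- invoking the left-shift closure of \Cref{theorem:closure} to turn any standard realization $f\in\mathcal{R}_{\Sigma}$ into a shifted realization $g$ with $g(\bm{x})_{i-1}=f(\bm{x})_i$ -- is exactly the paper's (implicit) argument; the corollary is stated as a direct application of the theorem with no separate proof, and your boundary bookkeeping about the \texttt{<s>} prefix matches the paper's closing remark. Your additional converse inclusion via the right-shift selector $\texttt{select}(\texttt{indices}-1,\texttt{indices},=)$ is a correct mirror-image construction, but it proves more than the corollary claims: the paper only asserts that the shifted (MNTP) parameterization loses no expressivity, i.e.\ that dual models generalize masked-diffusion models, not that the two parameterizations realize identical function classes.
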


\newpage

\section{Validation loss curves}
\label{app:validation}

While we focused on actual downstream performance in the main experiments, we also show the validation loss below to demonstrate the training dynamics.

The validation curves in \Cref{fig:loss-128x} focus on an extremely data-constrained scenario with 128 data repetitions. There, it is crucial to avoid overfitting, which can be achieved by increasing the proportion of masked diffusion during training. Note that the noise of some of the curves is only due to our implementation of measuring the validation loss -- the sample size can be too small when the proportion of the respective training objective is low.

\begin{figure}[h!]
    \centering
    \includegraphics[width=\linewidth]{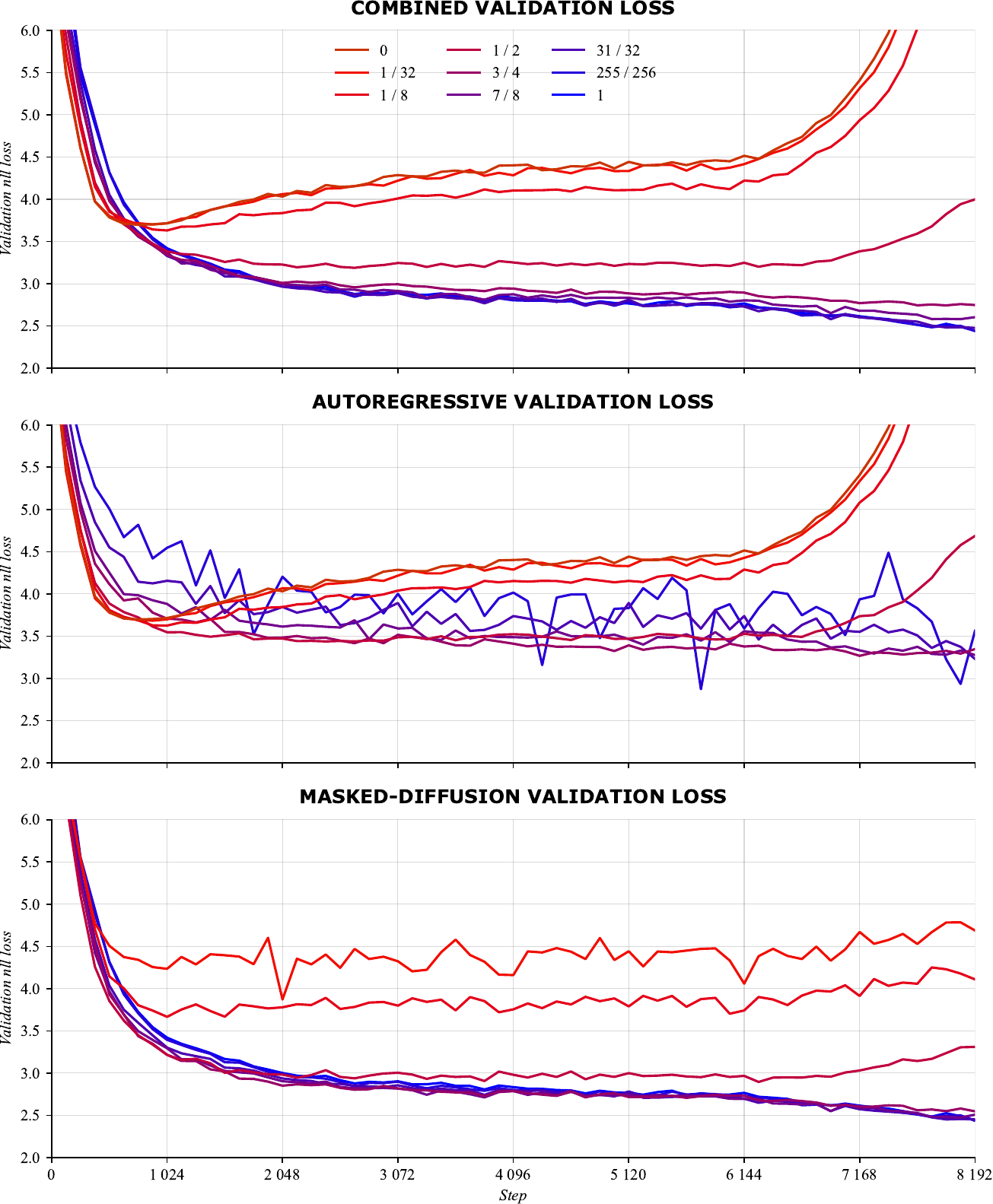}
    \caption{\textbf{Validation loss curves for 128 repetitions.} These plots clearly demonstrate how training runs with high ${\color{Red}\alpha}$ (in red) overfit. Low ${\color{Red}\alpha}$ values are in blue.}
    \label{fig:loss-128x}
\end{figure}

Contrary to the previous figure, \Cref{fig:loss-4x} shows validation curves for 4 data repetitions. Here, overfitting is not an issue; instead it is crucial to improve the learning speed by increasing the proportion of autoregressive language modeling.

\begin{figure}[h!]
    \centering
    \includegraphics[width=\linewidth]{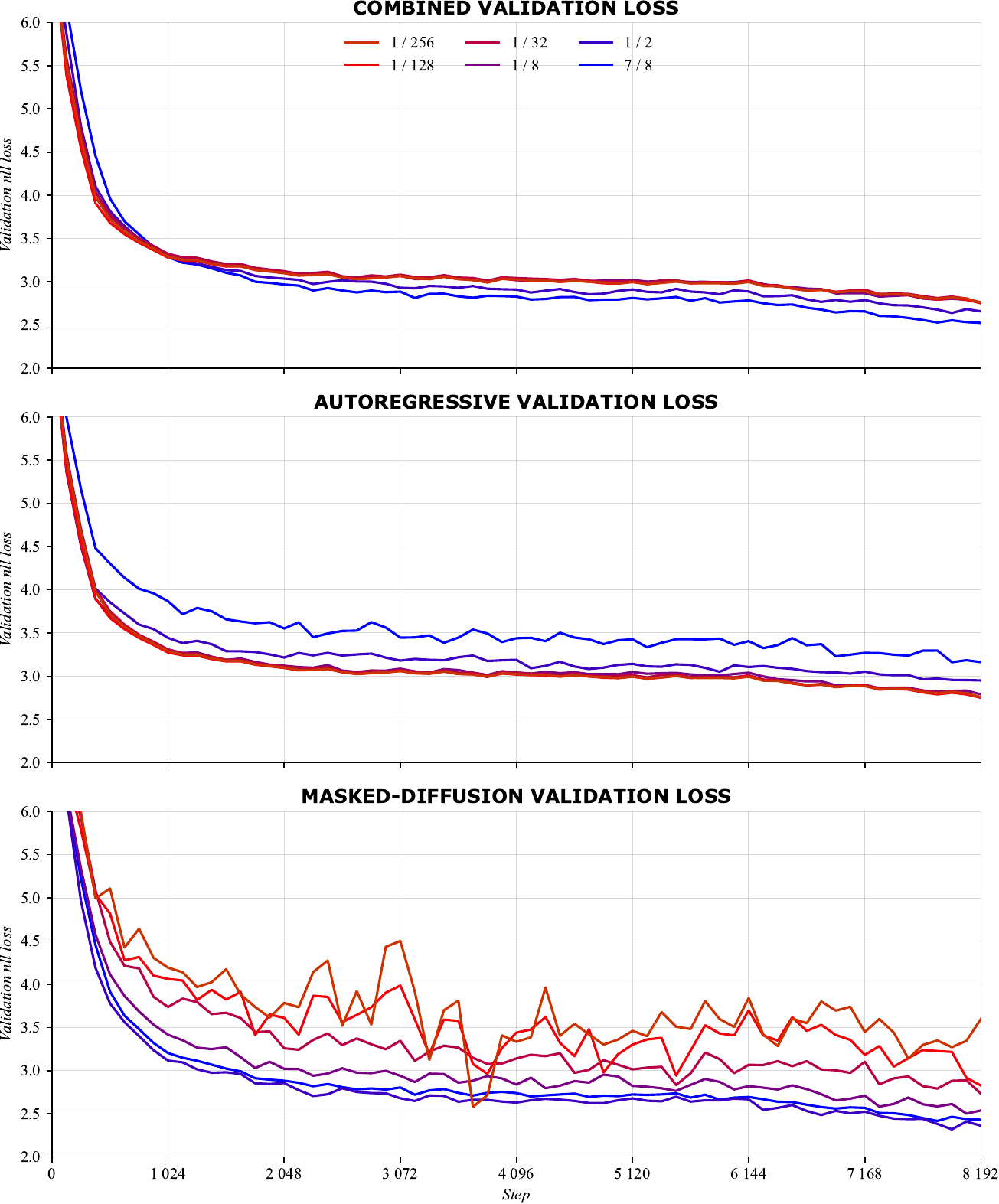}
    \caption{\textbf{Validation loss curves for 4 repetitions.} All losses monotonically decrease because overfitting is not a concern in this setting. High ${\color{Red}\alpha}$ values are plotted in red and low ${\color{Red}\alpha}$ values are shown in blue.}
    \label{fig:loss-4x}
\end{figure}

\section{Effects of number of mask tokens on the PLL}\label{app:num_tokens}

We first look at whether using a single number of mask tokens can lead to a good estimation of the PLL in general. For this, we evaluate five different models from 1 to 6 mask tokens and report the results in \cref{tab:model_a,tab:model_b,tab:model_c,tab:model_d,tab:model_e}.

\begin{table}[h]
    \centering
    \begin{minipage}[t]{0.48\textwidth}
        \centering
        \caption{\textbf{PLL performance depending on the number of mask tokens.} We show the PLL performance on the 9 tasks of the model trained with an equal ${\color{Red}\alpha}$ weight of masked-diffusion and AR and 32 repetitions with different number of masks. Best results per task are boldfaced.\vspace{1.4em}}
        \resizebox{\textwidth}{!}{%
        \begin{tabular}{@{}lcccccc@{}}
            \toprule
            \textbf{Task} & \multicolumn{6}{c}{\textbf{Number of masks}} \\
            \cline{2-7}
            & \textbf{1} & \textbf{2} & \textbf{3} & \textbf{4} & \textbf{5} & \textbf{6}\\
            \midrule
             ARC Easy&  18.7&  24.1&  25.5&  \textbf{26.3}&  26.0& \textbf{26.3}\\
             ARC Challenge&  \textbf{\hphantom{0}4.7}&  \hphantom{0}3.3&  \hphantom{0}3.8&  \hphantom{0}2.7&  \hphantom{0}1.9& \hphantom{0}2.6\\
             BLiMP&  \textbf{65.2}&  63.9&  62.5&  60.3&  60.5& 60.3\\
             Commonsense QA&  29.4&  32.8&  33.9&  \textbf{34.1}&  \textbf{34.1}& \textbf{34.1}\\
             HellaSwag&  \textbf{29.8}&  27.0&  26.7&  27.1&  26.7& 26.4\\
             MMLU&  \hphantom{0}2.0&  \textbf{\hphantom{0}3.5}&  \hphantom{0}3.1&  \hphantom{0}2.9&  \hphantom{0}3.3& \hphantom{0}3.3\\
             OpenBook QA&  \hphantom{0}9.1&  \hphantom{0}7.7&  \hphantom{0}8.5&  \textbf{\hphantom{0}9.3}&  \hphantom{0}7.2& \hphantom{0}6.9\\
             PIQA&  33.1&  34.3&  35.1&  35.4&  35.6& \textbf{36.8}\\
             SIQA&  11.4&  13.3&  13.7&  13.5&  \textbf{14.4}& \textbf{14.4}\\[0.3em]
             \textbf{Average} & 22.6 & 23.3 & \textbf{23.6} & 23.5 & 23.3 & 23.4\\
             \bottomrule
        \end{tabular}%
        }
        \label{tab:model_a}
    \end{minipage}%
    \hfill
    \begin{minipage}[t]{0.48\textwidth}
        \centering
        \caption{\textbf{PLL performance depending on the number of mask tokens.} We show the PLL performance on the 9 tasks of the model trained with a 1 masked-diffusion to 7 autoregressive ratio (${\color{Red}\alpha}=\nicefrac{7}{8}$) and 32 repetitions with different number of masks. Best results per task are boldfaced.\vspace{0.5em}}
        \resizebox{\textwidth}{!}{%
        \begin{tabular}{@{}lcccccc@{}}
            \toprule
            \textbf{Task} & \multicolumn{6}{c}{\textbf{Number of masks}} \\
            \cline{2-7}
            & \textbf{1} & \textbf{2} & \textbf{3} & \textbf{4} & \textbf{5} & \textbf{6}\\
            \midrule
             ARC Easy&  18.2&  25.6&  27.1&  28.2&  26.9& \textbf{27.5}\\
             ARC Challenge&  \hphantom{0}1.9&  \hphantom{0}3.2&  \hphantom{0}2.4&  \hphantom{0}2.6&  \hphantom{0}3.6& \textbf{\hphantom{0}4.7}\\
             BLiMP&  \textbf{61.2}&  60.0&  58.3&  56.9&  57.0& 57.3\\
             Commonsense QA&  24.2&  29.1&  29.0&  \textbf{29.4}&  \textbf{29.4}& \textbf{29.4}\\
             HellaSwag&  25.2&  25.7&  26.6&  27.0&  \textbf{26.8}& \textbf{26.8}\\
             MMLU&  \hphantom{0}1.9&  \hphantom{0}3.4&  \hphantom{0}4.0&  \hphantom{0}3.9&  \hphantom{0}4.0& \textbf{\hphantom{0}4.2}\\
             OpenBook QA&  \hphantom{0}9.9&  10.1&  \textbf{12.3}&  10.9&  10.1& \hphantom{0}9.6\\
             PIQA&  31.0&  34.7&  \textbf{36.1}&  36.0&  35.0& 35.9\\
             SIQA&  11.7&  11.8&  14.2&  13.7&  14.1& \textbf{14.3}\\[0.3em]
             \textbf{Average} & 20.6& 22.6& \textbf{23.3}& 23.2& 23.0& \textbf{23.3}\\
             \bottomrule
        \end{tabular}%
        }
        \label{tab:model_b}
    \end{minipage}
\end{table}

\begin{table}[h]
    \centering
    \begin{minipage}[t]{0.48\textwidth}
        \centering
        \caption{\textbf{PLL performance depending on the number of mask tokens.} We show the PLL performance on the 9 tasks of the model trained with a 7 masked-diffusion to 1 autoregressive ratio (${\color{Red}\alpha}=\nicefrac{1}{8}$) and 32 repetitions with different number of masks. Best results per task are boldfaced.\vspace{0.5em}}
        \resizebox{\textwidth}{!}{%
        \begin{tabular}{@{}lcccccc@{}}
            \toprule
            \textbf{Task} & \multicolumn{6}{c}{\textbf{Number of masks}} \\
            \cline{2-7}
            & \textbf{1} & \textbf{2} & \textbf{3} & \textbf{4} & \textbf{5} & \textbf{6}\\
            \midrule
             ARC Easy&  16.3&  20.8&  23.9&  24.0&  \textbf{24.9}& \textbf{24.9}\\
             ARC Challenge&  \textbf{\hphantom{0}5.7}&  \hphantom{0}3.9&  \hphantom{0}3.5&  \hphantom{0}1.8&  \hphantom{0}3.3& \hphantom{0}2.2\\
             BLiMP&  \textbf{69.5}&  67.6&  64.0&  60.7&  60.1& 60.1\\
             Commonsense QA&  25.4&  29.7&  30.6&  31.1&  31.1& \textbf{31.2}\\
             HellaSwag&  \textbf{25.5}&  22.8&  21.0&  21.2&  20.5& 19.8\\
             MMLU&  \hphantom{0}0.5&  \hphantom{0}2.2&  \hphantom{0}2.2&  \hphantom{0}2.0&  \textbf{\hphantom{0}2.5}& \hphantom{0}2.4\\
             OpenBook QA&  13.1&  12.0&  \textbf{15.2}&  14.4&  13.1& 13.9\\
             PIQA&  29.6&  30.3&  30.8&  30.1&  \textbf{31.2}& 31.0\\
             SIQA&  12.2&  15.0&  \textbf{15.2}&  13.6&  13.8& 13.9\\[0.3em]
             \textbf{Average} & 22.0& 22.7& \textbf{22.9}& 22.1& 22.3& 22.2\\
             \bottomrule
        \end{tabular}%
        }
        \label{tab:model_c}
    \end{minipage}%
    \hfill
    \begin{minipage}[t]{0.48\textwidth}
        \centering
        \caption{\textbf{PLL performance depending on the number of mask tokens.} We show the PLL performance on the 9 tasks of the model trained with an equal ratio of masked-diffusion and AR (${\color{Red}\alpha}=\nicefrac{1}{2}$) and 16 repetitions with different number of masks. Best results per task are boldfaced.\vspace{1.4em}}
        \resizebox{\textwidth}{!}{%
        \begin{tabular}{@{}lcccccc@{}}
            \toprule
            \textbf{Task} & \multicolumn{6}{c}{\textbf{Number of masks}} \\
            \cline{2-7}
            & \textbf{1} & \textbf{2} & \textbf{3} & \textbf{4} & \textbf{5} & \textbf{6}\\
            \midrule
             ARC Easy&  16.8&  23.7&  25.8&  25.8&  \textbf{26.1}& \textbf{26.1}\\
             ARC Challenge&  \textbf{\hphantom{0}7.2}&  \hphantom{0}4.4&  \hphantom{0}4.4&  \hphantom{0}4.8&  \hphantom{0}3.2& \hphantom{0}4.5\\
             BLiMP&  \textbf{65.3}&  64.8&  63.1&  60.7&  60.6& 60.4\\
             Commonsense QA&  29.7&  33.8&  35.1&  35.1&  \textbf{35.2}& \textbf{35.2}\\
             HellaSwag&  \textbf{30.5}&  27.9&  27.8&  27.9&  27.2& 26.8\\
             MMLU&  \hphantom{0}1.3&  \hphantom{0}2.4&  \textbf{\hphantom{0}2.9}&  \hphantom{0}2.5&  \hphantom{0}2.7& \hphantom{0}2.5\\
             OpenBook QA&  12.3&  12.0&  \textbf{13.1}&  11.2&  11.7& 11.7\\
             PIQA&  33.8&  34.6&  36.0&  34.7&  36.3& \textbf{37.0}\\
             SIQA&  14.3&  13.9&  15.9&  15.3&  15.9& \textbf{16.1}\\[0.3em]
             \textbf{Average} & 23.5& 24.2& \textbf{24.9}& 24.2& 24.3& 24.5\\
             \bottomrule
        \end{tabular}%
        }
        \label{tab:model_d}
    \end{minipage}
\end{table}

\begin{table}
    \centering
    \begin{minipage}[t]{0.5\textwidth}
    \caption{\textbf{PLL performance depending on the number of mask tokens.} We show the PLL performance on the 9 tasks of the model trained with an equal ratio of masked-diffusion and AR (${\color{Red}\alpha}=\nicefrac{1}{2}$) and 64 repetitions with different number of masks. Best results per task are boldfaced.\vspace{0.5em}}
    \centering
    \resizebox{\textwidth}{!}{%
    \begin{tabular}{@{}lcccccc@{}}
        \toprule
        \textbf{Task} & \multicolumn{6}{c}{\textbf{Number of masks}} \\
        \cline{2-7}
        & \textbf{1} & \textbf{2} & \textbf{3} & \textbf{4} & \textbf{5} & \textbf{6}\\
        \midrule
         ARC Easy&  16.6&  21.8&  23.5&  23.4&  23.1& 23.1\\
         ARC Challenge&  \hphantom{0}1.8&  \hphantom{0}3.9&  \hphantom{0}3.9&  \hphantom{0}3.2&  \textbf{\hphantom{0}4.0}& \hphantom{0}3.5\\
         BLiMP&  \textbf{63.1}&  61.2&  59.6&  57.5&  56.9& 56.9\\
         Commonsense QA&  24.6&  27.6&  28.5&  \textbf{28.7}&  \textbf{28.7}& \textbf{28.7}\\
         HellaSwag&  \textbf{26.8}&  25.2&  24.2&  24.7&  24.3& 24.1\\
         MMLU&  \hphantom{0}1.2&  \hphantom{0}3.1&  \hphantom{0}3.0&  \hphantom{0}3.2&  \textbf{\hphantom{0}3.4}& \hphantom{0}3.2\\
         OpenBook QA&  \hphantom{0}8.3&  \hphantom{0}8.5&  \textbf{11.7}&  10.1&  \hphantom{0}8.3& \hphantom{0}8.0\\
         PIQA&  31.0&  31.7&  32.1&  33.7&  \textbf{34.3}& 34.1\\
         SIQA&  \textbf{14.3}&  12.3&  \textbf{14.3}&  13.1&  13.3& 13.5\\[0.3em]
         \textbf{Average} & 20.8& 21.7& \textbf{22.3}& 22.0& 21.8& 21.7\\
         \bottomrule
    \end{tabular}%
    }
    \label{tab:model_e}
    \end{minipage}
\end{table}

\clearpage
We can see two clear trends from the results. The first is that the BLiMP and HellaSwag tasks are better evaluated with a single mask token, rather than multiple. This could be due to the simpler language found in these datasets. The second trend is that ARC-Easy, Commonsense QA, PIQA, and SIQA tend to do better with multi-token masking, this could be due to the more complex answers using more infrequent words that have a higher likelihood of being split into subwords. We therefore decide that using a combination of a single token mask for some tasks and a multiple tokens for others is the best solution. To find the optimal combination, we test all possible combinations. The results can be seen in \cref{tab:comb}.

\begin{table}[h!]
    \centering
    \caption{\textbf{PLL performance for combinations of one mask token and multi-mask token.} Best results per model are boldfaced.\vspace{0.5em}}
    \begin{tabular}{@{}l@{\hspace{2em}}ccccc@{}}
        \toprule
         \multirow{2}{*}{\textbf{Repetitions -- $\color{red}\alpha$}}&  \multicolumn{5}{c}{\textbf{Mask combination}}\\
         &  \textbf{1 -- 2}&  \textbf{1 -- 3}&  \textbf{1 -- 4}&  \textbf{1 -- 5}& \textbf{1 -- 6}\\
         \midrule
         32 -- \nicefrac{1}{2}&  24.1&  24.5&  24.6&  24.7& \textbf{24.8}\\
         32 -- \nicefrac{7}{8}&  22.8&  23.6&  23.7&  23.5& \textbf{23.8}\\
         32  -- \nicefrac{1}{8}&  23.5&  \textbf{24.3}&  24.0&  24.1& 24.2\\
         16 -- \nicefrac{1}{2}&  24.9&  25.7&  25.4&  25.7& \textbf{25.8}\\
         64 -- \nicefrac{1}{2}&  22.3&  \textbf{23.0}&  22.9&  22.9& 22.8\\ 
         \bottomrule
    \end{tabular}
    \label{tab:comb}
\end{table}

Based on \cref{tab:comb}, we decide to evaluate PLL for all models with both a single mask token and six mask tokens. Then we take the max performance between the two for each task.

\section{Total compute resources used for training}

The training of all 50 language models used in this paper was conducted on the LUMI supercomputer, each language model was trained on 128 AMD MI250X GPUs (which is equivalent to 256 logical devices) using roughly 1\,500 GPU hours. In total, the resources required for conducting all training runs equals to approximately 75\,000 GPU hours.

\section{PLL versus masked diffusion}\label{app:mntp_diff}

\cref{tab:pll_diff} shows that the performance of the masked-diffusion model is in general lower than that of the combined (1 and 6 mask) PLL. In addition, the two PLL evaluations took about 2 hours to complete while the masked-diffusion evaluation takes 12 hours to complete on a MI250X AMD GPU.

\begin{table}[h!]
    \centering
    \caption{\textbf{Normalized PLL versus Masked-Diffusion evaluation.} The scores for each task are normalized so that 0\% corresponds to the random baseline and 100\% is the perfect score. The best result for each task is in boldfaced. We evaluate a model trained with equal AR and masked-diffusion ratio (${\color{Red}\alpha}=\nicefrac{1}{2}$) and 32 repetitions. \vspace{0.5em}}
    \begin{tabular}{@{}lcc@{}}
        \toprule
         \textbf{Task}&  \textbf{PLL}& \textbf{Masked-Diffusion}\\
         \midrule
         ARC-Easy&  26.3& \textbf{27.1}\\
         BLiMP&  \textbf{65.2}& 56.5\\
         Commonsense QA&  \textbf{34.1}& 32.7\\
         HellaSwag&  \textbf{29.8}& 21.3\\
         PIQA&  \textbf{36.8}& 32.0\\
         \bottomrule
    \end{tabular}
    \label{tab:pll_diff}
\end{table}

\section{Prefix-LM versus autoregressive-LM on optimal models.}

\begin{table}[h]
    \centering
    \small
    \caption{\textbf{Normalized autoregressive and prefix performance of selected models.} The scores for each task are normalized so that 0\% corresponds to the random baseline and 100\% is the perfect score. The best result for each dataset size is in boldfaced. The results for BLiMP are the same, since there is no context and the prefix evaluation defaults to the autoregressive one. The AR ratios for the models are 12.5\% for the 128 repetitions, 75\% for the 32 repetitions, and 98.4\% for the single repetition.\vspace{0.5em}}
    \begin{tabular}{@{}lcccccccccc@{}}
        \toprule
        \textbf{Model} & \rotatebox{90}{\textbf{ARC-C}} & \rotatebox{90}{\textbf{ARC-E}} & \rotatebox{90}{\textbf{BLiMP}} & \rotatebox{90}{\textbf{CSQA}} & \rotatebox{90}{\textbf{HSwag}} & \rotatebox{90}{\textbf{MMLU}} & \rotatebox{90}{\textbf{OBQA}} & \rotatebox{90}{\textbf{PIQA}} & \rotatebox{90}{\textbf{SIQA}} & \rotatebox{0}{\textbf{Average}} \\
        \midrule
        \scriptsize{\textsc{1 repetition}} \\
         
         \hspace{1em}Autoregressive & \hphantom{0}5.7 & 28.6 & \textbf{63.7} & 35.1 & 31.1 & \hphantom{0}4.9 & \textbf{17.6} & 40.9 & 14.3 & 26.9 \\
         
         \hspace{1em}Prefix & \textbf{\hphantom{0}6.5} & \textbf{31.0} & \textbf{63.7} & \textbf{40.0} & \textbf{31.2} & \textbf{\hphantom{0}4.5} & 16.5 & \textbf{42.1} & \textbf{15.2} & \textbf{27.9} \\[0.5em]

         \scriptsize{\textsc{32 repetitions}} \\
         
         \hspace{1em}Autoregressive & \hphantom{0}3.3 & 28.0 & \textbf{57.9} & 31.1 & 26.4 & \hphantom{0}3.6 & 14.4 & 36.1 & 14.64 & 23.9 \\
         
         \hspace{1em}Prefix & \textbf{\hphantom{0}6.3} & \textbf{28.9} & \textbf{57.9} & \textbf{33.1} & \textbf{27.1} & \textbf{\hphantom{0}4.3} & \textbf{15.2} & \textbf{36.7} & \textbf{15.4} & \textbf{25.0} \\[0.5em]
         
        \scriptsize{\textsc{128 repetitions}} \\
        
         \hspace{1em}Autoregressive & \textbf{\hphantom{0}1.7} & 23.6 & \textbf{56.1} & 24.8 & 14.2 & \hphantom{0}1.6 & \hphantom{0}8.5 & 28.1 & 13.3 & 19.1 \\
         
         \hspace{1em}Prefix & \hphantom{0}1.3 & \textbf{24.1} & \textbf{56.1} & \textbf{28.5} & \textbf{12.4} & \textbf{\hphantom{0}2.3} & \textbf{10.9} & \textbf{30.9} & \textbf{15.2} & \textbf{20.5} \\
         \bottomrule
    \end{tabular}
    \label{tab:prefix_perf}
\end{table}

\cref{tab:prefix_perf} shows that evaluating with the prefix mask almost always outperforms using the causal mask when the models are optimally trained. This is true in both the regular and constrained data settings.

\section{Detailed results of diffusion-masked evaluation}\label{app:mntp_perf}

\renewcommand{\arraystretch}{1.0}
\begin{table}[h]
    \centering
    \caption{\textbf{The normalized PLL performance of selected models.} We show the results on all nine evaluated tasks for three repetition values; each repetition group contains the results of the best-performing ${\color{Red}\alpha}$ and of the autoregressive-only model. The scores for each task are normalized so that 0\% corresponds to random baseline and 100\% is the perfect score. The best result for each dataset size is boldfaced.\vspace{1em}}
    \resizebox{\textwidth}{!}{%
    \begin{tabular}{@{}lccccccccc@{\hspace{1.8em}}c@{}}
        \toprule
        \raisebox{0em}{\textbf{Model configuration}} & \rotatebox{90}{\textbf{ARC-C}} & \rotatebox{90}{\textbf{ARC-E}} & \rotatebox{90}{\textbf{BLiMP}} & \rotatebox{90}{\textbf{CSQA}} & \rotatebox{90}{\textbf{HSwag}} & \rotatebox{90}{\textbf{MMLU}} & \rotatebox{90}{\textbf{OBQA}} & \rotatebox{90}{\textbf{PIQA}} & \rotatebox{90}{\textbf{SIQA}} & \rotatebox{0}{\textbf{Average}} \\
        \midrule

         \scriptsize{\textsc{32 repetitions}} \\
         \hspace{1em}Dual (${\color{Red}\alpha}=\nicefrac{3}{4}$) & \textbf{\hphantom{0}6.0} & \textbf{28.3} & 62.7 & \textbf{33.4} & \textbf{27.8} & \textbf{\hphantom{0}4.3} & \textbf{12.3} & \textbf{37.4} & \textbf{15.4} & \textbf{25.3} \\
         \hspace{1em}Masked-Diffusion (${\color{Red}\alpha}=0$) & \,-0.1 & 22.3 & \textbf{64.8} & 29.0 & 24.1 & \hphantom{0}1.6 & \hphantom{0}9.1 & 27.2 & 14.4 & 21.4 \\[0.5em]

        \scriptsize{\textsc{128 repetitions}} \\
         \hspace{1em}Dual (${\color{Red}\alpha}=\nicefrac{1}{8}$) & \hphantom{0}2.8 & \textbf{23.3} & \textbf{63.5} & \textbf{30.5} & \textbf{25.0} & \hphantom{0}2.1 & \textbf{12.8} & \textbf{31.8} & \textbf{15.2} & \textbf{23.0} \\
         \hspace{1em}Masked-Diffusion (${\color{Red}\alpha}=0$) & \textbf{\hphantom{0}3.3} & 19.2 & 63.3 & 29.2 & 22.1 & \textbf{\hphantom{0}2.6} & \hphantom{0}9.3 & 28.3 & 12.0 & 21.0 \\
         
         \bottomrule
    \end{tabular}%
    }
    \label{tab:mntp_performance}
\end{table}

\cref{tab:mntp_performance} shows similar trends to those found in \cref{tab:performance}. The notable exception being for BLiMP where the performances are similar between both models. Unlike the autoregressive models, the performance of the purely masked-diffusion models is similar to each other. This is partially due to the model not overfitting, but also to it not being sample efficient. On the other hand we see that for the dual-objective models, the performance significantly increases as we increase the training data set size.

\end{document}